\DeclareMathOperator*{\argmax}{arg\,max}
\newtheorem{theorem}{Theorem}[section]
\newtheorem{lemma}[theorem]{Lemma}
\newtheorem{corollary}[theorem]{Corollary}
\newtheorem{definition}[theorem]{Definition}
\colorlet{shadecolor}{gray!15}
\title{CBD: A Certified Backdoor Detector Based on Local Dominant Probability}
\author{%
Zhen Xiang \quad\quad Zidi Xiong \quad\quad Bo Li\\
University of Illinois Urbana-Champaign\\
\texttt{\{zxiangaa, zidix2, lbo\}@illinois.edu} \\
}
\begin{document}
	
\maketitle
\vspace{-0.125in}
\begin{abstract}
\vspace{-0.075in}
Backdoor attack is a common threat to deep neural networks.
During testing, samples embedded with a backdoor trigger will be misclassified as an adversarial target by a backdoored model, while samples without the backdoor trigger will be correctly classified.
In this paper, we present the first \textit{certified backdoor detector} (CBD), which is based on a novel, adjustable conformal prediction scheme based on our proposed statistic \textit{local dominant probability}.
For any classifier under inspection, CBD provides 1) a detection inference, 2) the condition under which the attacks are guaranteed to be detectable for the same classification domain, and 3) a probabilistic upper bound for the false positive rate.
Our theoretical results show that attacks with triggers that are more resilient to test-time noise and have smaller perturbation magnitudes are more likely to be detected with guarantees.
Moreover, we conduct extensive experiments on four benchmark datasets considering various backdoor types, such as BadNet, CB, and Blend.
CBD achieves comparable or even higher detection accuracy than state-of-the-art detectors, and it in addition provides detection certification.
Notably, for backdoor attacks with random perturbation triggers bounded by $\ell_2\leq0.75$ which achieves more than 90\% attack success rate, CBD achieves 100\% (98\%), 100\% (84\%), 98\% (98\%), and 72\% (40\%) empirical (certified) detection true positive rates on the four benchmark datasets GTSRB, SVHN, CIFAR-10, and TinyImageNet, respectively, with low false positive rates.

\end{abstract}

\vspace{-0.1in}
\section{Introduction}\label{sec:intro}
\vspace{-0.05in}

Despite the success of deep neural networks (DNNs) in many applications, they are vulnerable to adversarial attacks such as backdoor attacks \cite{Review, BAsurvey}.
A DNN being backdoored will learn to predict an adversarial target class for test samples embedded with a backdoor trigger, while maintaining a high accuracy on clean test samples without the trigger \cite{BadNet, Targeted, Trojan, Clean_Label_BA, nguyen2020inputaware, Liu2020Refool, nguyen2021wanet, li_ISSBA_2021}.

Backdoor detection is a popular task for backdoor defense. It aims to detect if a given model is backdoored without access to the training set or any real test samples that are possibly embedded with the trigger \cite{Competition, TrojAI}.
The task corresponds to practical scenarios where the user of an app or a legacy system containing a DNN seeks to know if the model is backdoor attacked, where the training set is not available \cite{ModelZoo}.
Various empirical approaches have been proposed for backdoor detection, most of which are based on trigger reverse engineering \cite{NC, DeepInspect, Post-TNNLS, DataLimited, Shen2021BackdoorSF, taog2022better, hu2022trigger}, or meta classification \cite{META, meta_sup}.
However, none of these works quantitatively investigate the conditions under which the backdoor attacks are guaranteed to be detectable.
Without a detection guarantee, DNNs are still vulnerable to future attacks (e.g.) with new trigger types \cite{nguyen2021wanet, Zhao_2022_CVPR, Wang_2022_CVPR}.

In this paper, we make the first attempt toward the \textit{certification} of backdoor detection.
Certification is an important concept for studies on the robustness of DNNs against adversarial examples \cite{Szegedy_seminal, FGSM, CW, ZOO, PGD}.
In particular, a robustness certification refers to a probabilistic or deterministic guarantee for a model to produce desired outputs (e.g. correct label prediction) when the adversarial perturbation applied to the inputs satisfies certain conditions (e.g. with some constrained perturbation magnitude) \cite{sok_li, NEURIPS2018_f2f44698, suman_2019_sp, randomized_smoothing, singh2018robustness}.
As an analogy, we propose a \textit{certified backdoor detector} (CBD) that is guaranteed to trigger an alarm if the attack for a given domain satisfies certain conditions.
In Sec. \ref{sec:detection}, we introduce the detection procedure of CBD based on conformal prediction (with our proposed, optional adjustment scheme) using a novel (model-level) statistic named \textit{local dominant probability}.
The calibration set for conformal prediction is obtained from a small number of benign shadow models trained on a small validation set, which addresses the unavailability of the training set.
In Sec. \ref{sec:certification}, we derive the condition for attacks with detection guarantee, as well as a probabilistic upper bound for the false positive rate of our detector, for any prescribed significance level specifying the aggressiveness of the detection.
Notably, our certification is comprehensive -- for any domain, more effective attacks with strong \textit{trigger robustness} (which measures the resilience of a trigger against test-time noises) and more stealthy attacks (against human inspectors) with small trigger perturbation magnitudes are easier to be detected with guarantees.
Moreover, both our detector and the certification method do not assume the trigger incorporation mechanism or the training configuration of the model, which allows their potential application to future attacks.
Our contributions are summarized below:
\vspace{-0.05in}
\begin{itemize}[leftmargin=*]
\setlength\itemsep{0em}
\item We propose CBD, the first certified backdoor detector, which is based on an adjustable conformal prediction scheme using a novel \textit{local dominant probability} statistic.
\item We propose a certification method and show that for any domain, backdoor attacks with stronger \textit{trigger robustness} and smaller trigger perturbation magnitudes are more likely to be detected by CBD with guarantee. We also derive a probabilistic upper bound for the false positive rate of CBD.
\item We show that CBD achieves comparable or even higher detection accuracy than state-of-the-art detectors against three types of backdoors.
We also show that for backdoor attacks with random perturbation triggers bounded by $\ell_2\leq0.75$, CBD achieves 100\% (98\%), 100\% (84\%), 98\% (98\%), and 72\% (40\%) empirical (certified) true positive rates on GTSRB, SVHN, CIFAR-10, and TinyImageNet, respectively, with only 0\%, 0\%, 6\%, and 10\% false positive rates, respectively.
\end{itemize}

\vspace{-0.075in}
\section{Related Work}\label{sec:related}
\vspace{-0.05in}

\textbf{Backdoor Detection Methods} Existing methods for backdoor detection are all empirical without theoretical guarantees.
An important type of reverse-engineering-based method estimates putative triggers for anomaly detection \cite{NC, DeepInspect, Post-TNNLS, ABS, TwoClass, PCBD, MAMF, L-RED, DataLimited, Shen2021BackdoorSF, taog2022better, hu2022trigger}.
Certification for these methods is hard due to the complexity of trigger reverse engineering.
Another type of detector is based on meta-classification that involves a large number of shadow models trained with and without attacks \cite{META, meta_sup}.
Differently, our CBD is based on conformal prediction involving a scalar detection statistic (rather than a high dimensional feature vector employed by \cite{META, meta_sup}), such that only a small number of benign shadow models will be required.
More importantly, our CBD is certified, i.e. with a detection guarantee, which is different from all the detention methods mentioned above.

\textbf{Other Backdoor Defense Tasks} Backdoor defenses during training aim to produce a backdoor-free classifier from the possibly poisoned training set \cite{SS, AC, CI, Differential_Privacy, huang2022backdoor, trim_loss, CandS, Hong20GS, borgnia_dp-instahide_2021, doesntkill}. These defenses require access to the training set, which is unavailable for backdoor detection. Backdoor mitigation aims to ``repair'' models being backdoor attacked \cite{FP, ANP, NAD, ShapPruning, CLP, zeng2022adversarial}, which can be viewed as a downstream task following backdoor detection. Inference-stage trigger detection aims to detect if a test sample contains the trigger \cite{STRIP, voting, Februus, SentiNet, beatrix, li2023inflight}. However, backdoor detection is performed before the inference stage, where test samples are not available. These defense tasks will not be further discussed due to their fundamental differences from the backdoor detection task.

\textbf{Certified Backdoor Defenses} Existing methods mostly modify the training process to prevent the backdoor from being learned, while manipulating the test sample to destroy any potentially embedded triggers \cite{zhang2022bagflip, RAB, Jia2020CertifiedRO, label_flipping_certified}.
These methods are not applicable to the backdoor detection problem where both the training set and the test samples are not available.
More importantly, all these existing certified defenses are deployed during training, which requires an uncontaminated training process fully controlled by the defender.
In contrast, in this paper, we consider a stronger threat model that allows the attacker to have full control of the training process.

\vspace{-0.075in}
\section{Detection Method}\label{sec:detection}
\vspace{-0.05in}
\subsection{Problem Definition}\label{subsec:problem}
\vspace{-0.025in}

\textbf{Threat Model} Consider a classification domain with sample space $\mathcal{X}$ and label space $\mathcal{Y}$.
A backdoor attack is specified by a trigger with some incorporation function $\delta:\mathcal{X}\rightarrow\mathcal{X}$ and a target class $t\in\mathcal{Y}$.
For a successful backdoor attack, the victim classifier will predict to the target class $t$ whenever a test sample is embedded with the trigger, while test samples without the trigger will be correctly classified \cite{BadNet, Targeted, Trojan, Haoti}.
In this paper, we do allow advanced attackers with full control of the training process \cite{nguyen2021wanet, Zhao_2022_CVPR, Wang_2022_CVPR, nguyen2020inputaware, Hidden-trigger}.
This is deemed a stronger threat model than all previous works on certified backdoor defense where the defender is assumed with full control of the training process.
However, in this paper, we do not consider backdoor attacks with multiple triggers or target classes \cite{xue2022imperceptible, N2Nattacks} -- even empirical detection of these attacks is a challenging problem \cite{xiang2023umd}.

\textbf{Goal of Certified Backdoor Detection}
The fundamental goal is backdoor detection, i.e. to infer if a given classifier $f(\cdot;w):\mathcal{X}\rightarrow\mathcal{Y}$ is backdoored or not \cite{Competition, TrojAI}.
The defender has no access to the training set or any test samples that may contain the trigger.
In practice, the defender also has no access to benign classifiers with high accuracy for the same domain as $f(\cdot; w)$ -- otherwise, these benign classifiers can be used for the task, and detection will be unnecessary.
However, the defender is assumed with a small validation set of clean samples for detection -- this is a standard assumption made by most post-training backdoor defenses \cite{NC, META, Post-TNNLS, Shen2021BackdoorSF, ABS}.

Beyond model inference, a \textit{certified} backdoor detector also needs to provide each classification domain (associated with the model to be inspected) with a condition on $\delta$, $t$, and $w$, such that any successful backdoor attack with a trigger $\delta$ and a target class $t$ on a victim classifier $f(\cdot;w)$ (trained on this classification domain) is guaranteed to be detected if the condition is satisfied.
The stronger a certification is, the more likely a successful attack on the domain will be detectable with guarantee.
Moreover, a certification has to be associated with a guarantee on the false positive rate; otherwise, an arbitrarily strong certification can be achieved by increasing the aggressiveness of the detection rule.

\vspace{-0.05in}
\subsection{Overview of CBD Detection}\label{subsec:overview}
\vspace{-0.05in}

\textbf{Key Intuition}
For a successful backdoor attack with a trigger $\delta$ and a target class $t$, a test instance $x$ embedded with the trigger (denoted by $\delta(x)$) will be classified to the target class $t$ with high probability.
Practical backdoor triggers should also be robust (i.e. resilient) to noises either from the environment or introduced by simple defenses based on input-preprocessing such as blurring and/or quantization \cite{tte_defense_quantization1, tte_defense_quantization2}.
Such \textit{trigger robustness} can be measured by the distribution of the model prediction in the neighborhood of each $\delta(x)$ -- the more robust the trigger is, the more samples in the neighborhood of $\delta(x)$ will be predicted to the target class $t$.
Thus, if the perturbation magnitude of a robust trigger, i.e. $||\delta(x)-x||_2$, is small (which is usual for backdoor attacks to achieve stealthiness), a significant proportion of samples in the neighborhood of $x$ will also be classified to class $t$ due to their closeness to $\delta(x)$.
Such an increment in the target class probability in the neighborhood of all instances is captured by our proposed statistic named \textit{local dominant probability} (LDP) to distinguish backdoored classifiers from benign ones -- the former tend to have a larger LDP than the latter.

\textbf{Outline of CBD Detection Procedure}
In short, CBD performs an \textit{adjustable} conformal prediction to test if the LDP statistic computed for the model to be inspected is sufficiently large to trigger an alarm.
A small number of shadow models (with the same architecture as the model to be inspected) are trained on a relatively small validation set, with an LDP computed for each shadow model to form a calibration set.
Then, a small proportion of outliers with large values in the calibration set are optionally removed, e.g., by anomaly detection.
Based on this adjusted calibration set, the p-value is computed for the LDP for the model to be inspected.
The model is deemed to be backdoored if the p-value is less than some prescribed significance level (e.g. the classical 0.05 for statistical testing).

\subsection{Definition of LDP}\label{subsec:def_ldp}

We first define a \textit{samplewise local probability vector} (SLPV) to represent the distribution of the model prediction outcomes in the neighborhood of any given sample.
Then, based on SLPV, we define the \textit{samplewise trigger robustness} (STR) that measures the resilience of a trigger to the Gaussian noise when it is embedded in a particular sample.
Finally, we define the LDP statistic based on SLPV.

\begin{definition}\label{def:slpv}
(\textbf{Samplewise Local Probability Vector (SLPV)}) For any classifier $f(\cdot; w):{\mathcal X}\rightarrow{\mathcal Y}$ with parameters $w$, the SLPV for any input $x\in{\mathcal X}$ is a probability vector $\boldsymbol{p}(x|w, \sigma)\in[0,1]^K$ over the $K=|{\mathcal Y}|$ classes, with the $k$-th entry defined by $p_k(x|w, \sigma) \triangleq \mathbb{P}_{\epsilon \sim \mathcal{N}(0, \sigma^2 I)} (f(x+\epsilon;w)=k)$, $\forall k\in\mathcal{Y}$, where $\mathcal{N}(\mu, \Sigma)$ represents Gaussian distribution with mean $\mu$ and covariance $\Sigma$.
\end{definition}

\noindent\fcolorbox{white}{shadecolor}{%
\begin{minipage}{\textwidth}
\textbf{Remarks:} If $f(\cdot;w)$ is continuous at $x$ with $f(x;w)=k$ for some $k\in\mathcal{Y}$, it is trivial to show that $p_k(x|w, \sigma)\rightarrow1$ as $\sigma\rightarrow0$, i.e. the SLPV becomes a singleton at the predicted class of $x$ without the Gaussian noise. In practice, SLPV can be estimated by Monte Carlo for any given model and sample.
\end{minipage}}

\begin{definition}\label{def:str}
(\textbf{Samplewise Trigger Robustness (STR)})
Consider a backdoor attack with a trigger $\delta$ and a target class $t\in{\mathcal Y}$ against a victim model $f(\cdot; w)$.
For any sample $x\in{\mathcal X}$ and any isotropic Gaussian distribution $\mathcal{N}(0, \sigma^2I)$, the STR is defined by the $t$-th entry of the SLPV for $\delta(x)$ (i.e. sample $x$ with the trigger $\delta$ embedded), which is denoted by $R_{\delta,t}(x|w, \sigma)\triangleq p_t(\delta(x)|w, \sigma)$.
\end{definition}

\noindent\fcolorbox{white}{shadecolor}{%
\begin{minipage}{\textwidth}
\textbf{Remarks:} STR measures the resilience of a trigger against Gaussian noises. Usually, strong STR can be naturally achieved by embedding the trigger in a large variety of samples during training.
\end{minipage}}

\begin{definition}\label{def:ldp}
(\textbf{Local Dominant Probability (LDP)}) Consider a domain with $K=|{\mathcal Y}|$ classes and an isotropic Gaussian distribution $\mathcal{N}(0, \sigma^2I)$. The LDP for a classifier $f(\cdot;w)$ is defined by
\begin{equation}\label{eq:ldp}
s(w) = || \frac{1}{K} \sum_{k=1}^{K} \boldsymbol{p}(x_k|w, \sigma) ||_{\infty}
\end{equation}
where $x_1, \cdots, x_K$ are $K$ independent random samples satisfying $f(X_k;w)=k$, $\forall k\in\{1, \cdots, K\}$.
\end{definition}

\noindent\fcolorbox{white}{shadecolor}{%
\begin{minipage}{\textwidth}
\textbf{Remarks:}
By the definition, LDP for a given classifier $f(\cdot;w)$ is computed on $x_1, \cdots, x_K$ independently sampled from the $K$ classes respectively, satisfying $f(x_k;w)=k$ for $\forall k$.
LDP computed on more samples per class yields similar detection and certification performance empirically.
Specifically, we first compute the SLPV for each of the $K$ samples.
Then we take the average of the $K$ SLPVs and pick the maximum entry as the LDP for $f(\cdot;w)$.
Note that LDP is always no less than $\frac{1}{K}$.
\end{minipage}}

As stated in the key intuitions in Sec. \ref{subsec:overview}, LDP tends to be larger for backdoored classifiers than for benign ones.
This can be understood from the illustration in Fig. \ref{fig:key_ideas} for a classification domain with $K=4$ classes.
For the benign classifier on the left, the SLPVs for $x_1,\cdots, x_4$ are almost orthogonal to each other, leading to a small LDP close to $\frac{1}{4}$.
On the right, we consider a backdoor attack with a robust trigger $\delta$ and a target class 4 (with orange decision region).
The strong STRs for $x_1$, $x_2$, and $x_3$ (represented by the large orange regions around $\delta(x_1)$, $\delta(x_2)$, and $\delta(x_3)$), together with the small trigger perturbation magnitudes (i.e. small $||\delta(x_i)-x_i||_2$ for $i=1, 2, 3$), significantly change the class distribution in the neighborhood of each of $x_1$, $x_2$, and $x_3$.
In particular, there will be a clear increment in the 4-th entry of the SLPVs for $x_1$, $x_2$, and $x_3$.
Thus, the 4-th entry associated with the backdoor target class will dominate the average SLPV over $x_1, \cdots, x_4$, leading to a large LDP.

\begin{figure}[t]
\vspace{-0.05in}
	\centering
	\begin{minipage}{.48\textwidth}
		\centering
		\includegraphics[width=.65\linewidth]{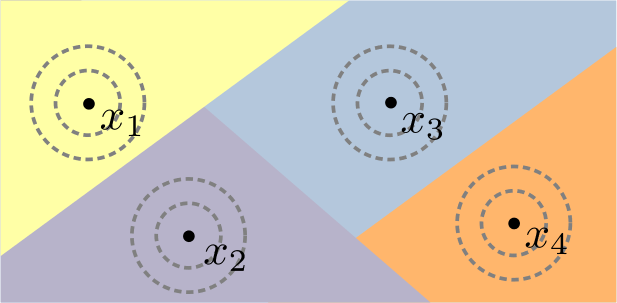}
		\subcaption{Benign classifier with a small LDP close to $\frac{1}{4}$.}
	\end{minipage}
	~
	\begin{minipage}{.48\textwidth}
		\centering
		\includegraphics[width=.65\linewidth]{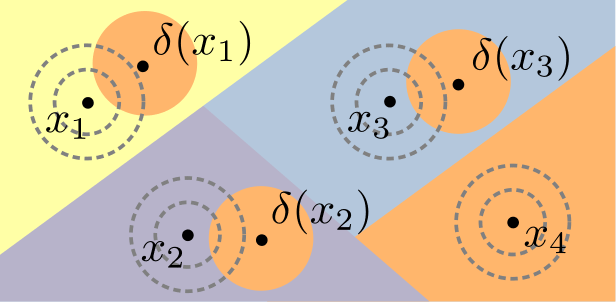}
		\subcaption{Classifier being backdoored with a large LDP.}
	\end{minipage}
	\caption{Illustration of the difference in LDP between benign and backdoored classifiers on a classification domain with $K=4$ classes. A backdoor attack with a robust trigger $\delta$ (with a small perturbation magnitude) and a target class 4 (with orange decision region) changes the class distribution in the neighborhood of $x_1$, $x_2$, and $x_3$, resulting in a larger LDP for the backdoored model than for the benign model.}\label{fig:key_ideas}
 \vspace{-0.1in}
\end{figure}

\vspace{-0.025in}
\subsection{CBD Detection Procedure}\label{subsec:detection_procedure}
\vspace{-0.025in}

Although backdoored and benign classifiers have different LDP distributions, it is still a challenge in practice to set a detection threshold.
To solve this problem, we propose to use the conformal prediction, which employs a calibration set for supervision \cite{pmlr-v25-vovk12, conformal_tutorial}.
Here, the calibration set consists of LDP statistics obtained from a small number of benign shadow models trained on the small validation set possessed by the defender.
However, the actual benign classifiers to be inspected are usually trained on more abundant data, such that the LDPs for these classifiers will likely follow a different distribution from the LDPs for the shadow models without sufficient training.
In particular, the LDPs in the calibration set (obtained from the shadow models) may easily have an overly large sample variance and a heavy tail of large outliers.
Directly using this calibration set for conformal prediction may lead to a conservative detection due to an overly large detection threshold.

Thus, we propose an optional adjustment scheme that treats the $m$ largest LDP statistics in the calibration set as outliers.
In practice, prior knowledge may be required to determine the exact value of $m$, while in our experiments, a small $m/N$ ($\leq0.2$, where $N$ is the size of the calibration set) may significantly increase the detection or certification power of our CBD with only small increment to the false positive rate.
The detection procedure of our CBD consists of the following four steps:\\
\textbf{1)} Given a classifier $f(\cdot; w)$ to be inspected, estimate LDP $s(w)$ based on Def. \ref{def:ldp}.\\
\textbf{2)} Train (benign) shadow models $f(\cdot; w_1), \cdots, f(\cdot; w_N)$ on the clean validation dataset, and construct a calibration set $\mathcal{S}_N=\{s(w_1), \cdots, s(w_N)\}$ by computing the LDP for each model.\\
\textbf{3)} Compute the adjusted conformal p-value (assuming $m$ large outliers) defined by:
\begin{equation}\label{eq:p_conformal}
q_m(w) = 1 - \frac{1 + {\rm min}\{|\{s\in\mathcal{S}_N:s < s(w)\}|, N-m\}}{N-m+1}
\end{equation}
\textbf{4)} Trigger an alarm if $q_m(w)\leq\alpha$, where $\alpha$ is a prescribed significance level (e.g. $\alpha$=0.05).

\section{CBD Certification}\label{sec:certification}

In addition to a detection inference, CBD also provides a certification, which is a condition under which attacks are guaranteed to be detectable.
Detailed proofs in this section are shown in App. \ref{sec:proof}.

\begin{theorem}\label{thm:tp}
(\textbf{Backdoor Detection Guarantee})
For an arbitrary classifier $f(\cdot; w):{\mathcal X}\rightarrow{\mathcal Y}$ to be inspected, let $x_1, \cdots, x_K$ be the $K$ randomly selected samples and $\mathcal{N}(0, \sigma^2 I)$ be the isotropic Gaussian distribution used to compute the LDP for $f(\cdot; w)$.
Let $\alpha$ be the prescribed significance level of CBD.
A backdoor attack with a trigger $\delta$ and a target class $t$ is guaranteed to be detected if:
\begin{equation}\label{eq:certification_tp}
\Delta < \sigma (\Phi^{-1} (1 - s_{(N-m-\lfloor \alpha(N-m+1) \rfloor)}) - \Phi^{-1} (1 - \pi))
\end{equation}
where $\Phi$ is the standard Gaussian CDF, $\pi=\min_{k=1,\cdots,K} R_{\delta, t}(x_k|w,\sigma)$ is the minimum STR over $x_1, \cdots, x_K$, $\Delta=\max_{k=1,\cdots,K} ||\delta(x_k)-x_k||_2$ is the maximum perturbation magnitude of the trigger, 
$m$ is the number of assumed outliers in the calibration set $\mathcal{S}_N$ with size $N$,
and $s_{(n)}$ denotes the $n$-th smallest element in $\mathcal{S}_N$.
\end{theorem}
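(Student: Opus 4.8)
The plan is to split the argument into two independent pieces: a purely combinatorial analysis of when the adjusted conformal p-value $q_m(w)$ falls below $\alpha$, and a geometric lower bound on the LDP $s(w)$ of a backdoored model. First I would unfold the definition of $q_m(w)$ in Eq.~\eqref{eq:p_conformal} to recast the alarm condition $q_m(w)\le\alpha$ as an order-statistic threshold on $s(w)$. Writing $c=|\{s\in\mathcal{S}_N:s<s(w)\}|$, the inequality $q_m(w)\le\alpha$ rearranges to $\min\{c,N-m\}\ge(N-m)-\alpha(N-m+1)$, and since $c$ is an integer (and $\alpha(N-m+1)\ge0$) this is equivalent to $c\ge r^\star$ with $r^\star=N-m-\lfloor\alpha(N-m+1)\rfloor$. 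Hence the alarm fires precisely when $s(w)>s_{(r^\star)}$, which identifies the order statistic appearing in Eq.~\eqref{eq:certification_tp}. The remaining goal is to show that the certification condition forces $s(w)$ above this threshold.

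For the lower bound on $s(w)$, I would first discard all coordinates except the target one: since $s(w)=\|\frac{1}{K}\sum_k \boldsymbol{p}(x_k|w,\sigma)\|_\infty$ is a maximum over coordinates, it is at least the $t$-th coordinate, giving $s(w)\ge\frac{1}{K}\sum_{k=1}^K p_t(x_k|w,\sigma)$. The crux is then to bound each $p_t(x_k|w,\sigma)$ from below using the trigger robustness at the nearby point $\delta(x_k)$. Here I would invoke the Gaussian randomized-smoothing Lipschitz property: for the indicator $h(\cdot)=\mathds{1}[f(\cdot;w)=t]$, the map $x\mapsto\Phi^{-1}(\mathbb{E}_{\epsilon}[h(x+\epsilon)])$ is $\frac{1}{\sigma}$-Lipschitz, so that $p_t(x_k|w,\sigma)\ge\Phi\big(\Phi^{-1}(p_t(\delta(x_k)|w,\sigma))-\|x_k-\delta(x_k)\|_2/\sigma\big)$. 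Substituting $p_t(\delta(x_k)|w,\sigma)=R_{\delta,t}(x_k|w,\sigma)\ge\pi$ and $\|x_k-\delta(x_k)\|_2\le\Delta$, and using monotonicity of $\Phi$ and $\Phi^{-1}$, each term is at least $\Phi(\Phi^{-1}(\pi)-\Delta/\sigma)$, whence $s(w)\ge\Phi(\Phi^{-1}(\pi)-\Delta/\sigma)$.

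Finally I would combine the two halves. It suffices that $\Phi(\Phi^{-1}(\pi)-\Delta/\sigma)>s_{(r^\star)}$; applying the increasing function $\Phi^{-1}$ and rearranging yields $\Delta<\sigma(\Phi^{-1}(\pi)-\Phi^{-1}(s_{(r^\star)}))$, which is exactly Eq.~\eqref{eq:certification_tp} after using the Gaussian symmetry identity $\Phi^{-1}(1-a)=-\Phi^{-1}(a)$ to rewrite it in the stated $1-s_{(r^\star)}$ and $1-\pi$ form. I expect the main obstacle to be the smoothing step: one must justify the $\frac{1}{\sigma}$-Lipschitz bound for $\Phi^{-1}\circ p_t$ (via a Neyman--Pearson / Gaussian isoperimetric argument, as in certified robustness through randomized smoothing) and verify it applies uniformly across all $K$ anchor samples, so that the single worst-case pair $(\pi,\Delta)$ controls every term in the average. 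Secondary care is needed in the p-value step to handle the floor function and possible ties among the order statistics when converting the counting inequality $c\ge r^\star$ into the strict threshold $s(w)>s_{(r^\star)}$.
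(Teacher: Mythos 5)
Your proposal is correct and takes essentially the same route as the paper's proof: the paper likewise reduces the alarm condition to $s(w) > s_{(N-m-\lfloor \alpha(N-m+1) \rfloor)}$, lower-bounds the LDP by its target-class coordinate, and bounds each $p_t(x_k|w,\sigma)$ in terms of the STR at $\delta(x_k)$ via the Neyman--Pearson lemma for Gaussians — which is exactly the $\frac{1}{\sigma}$-Lipschitz property of $\Phi^{-1}\circ p_t$ you invoke, only stated through an explicit half-space construction. The two lower bounds coincide after the identity $1-\Phi(u)=\Phi(-u)$, so your argument is a faithful, slightly more packaged version of the paper's.
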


\begin{proof}[Proof (sketch)]
The STR for each sample $x_k$ equals the $t$-th entry of the SLPV for $\delta(x_k)$ by Def. \ref{def:str}.
We also connect the SLPV for $\delta(x_k)$ to the SLPV for $x_k$ using the Neyman-Pearson lemma (\cite{neyman_pearson}).
Based on this connection, we derive the lower bound for the minimum STR, such that the $t$-th entry of the SLPV of each $x_k$ is sufficiently large to result in a large LDP for the attack to be detected.
\end{proof}

\noindent\fcolorbox{white}{shadecolor}{%
\begin{minipage}{\textwidth}
\textbf{Remarks:}
(1) (\textbf{Main Results}) For fixed trigger perturbation size ($\Delta$), detection of attacks with larger STR ($\pi$) is more likely to be guaranteed; while for fixed STR, detection of attacks with smaller trigger perturbation size is more likely to be guaranteed.
(2) Our backdoor detection guarantee is inspired by the randomized smoothing approach in \cite{randomized_smoothing} for certified robustness against adversarial examples. However, certified backdoor detection and certified robustness against adversarial examples are fundamentally different, as will be detailed in App. \ref{sec:difference}.
(3) Certified backdoor detection and certified robustness against backdoors complement each other. The former provides detection guarantees to strong backdoor attacks, while the latter prevents the trigger from being learned during training \cite{zhang2022bagflip, label_flipping_certified}.
The two types of certification may cover the entire attack space together in the future, such that a backdoor attack will be either strong enough to be detected or weak enough to be disabled.
\end{minipage}}

A meaningful certification for backdoor detection should also be along with a guarantee for the false positive rate (FPR).
Otherwise, one can easily design a backdoor detector that always triggers an alarm, which provides detection guarantees to all backdoor attacks but is useless in practice.

\begin{theorem}\label{thm:fpr}
\textbf{(Probabilistic Upper Bound for FPR)}
Consider a random calibration set $\mathcal{S}_N=\{s_1, \cdots, s_N\}$ with $s_1, \cdots, s_N$ i.i.d. following some continuous distribution $F$.
Consider a random benign classifier $f(\cdot; W)$ with LDP $s(W)$ following some distribution $\tilde{F}$.
Assume $F$ dominates $\tilde{F}$ in the sense of first-order stochastic dominance.
Let $m$ be any number of assumed outliers in $\mathcal{S}_N$ and let $\alpha$ be any prescribed significance level of CBD.
Then, the FPR of CBD on $f(\cdot; W)$ conditioned on $\mathcal{S}_N$, which is denoted by $Z_N=\mathbb{P}(q_m(W)\leq \alpha|\mathcal{S}_N)$ based on Eq. \eqref{eq:p_conformal},
will be first-order dominated by a random variable $B$ following ${\rm Beta}(m+l+1, N-m-l)$ with $l=\lfloor \alpha(N-m+1) \rfloor$, i.e. $B\succeq_1 Z_N$.
In other words, $\mathbb{P}(Z_N\leq q)\geq\mathbb{P}(B\leq q)$ for any real $q$.
\end{theorem}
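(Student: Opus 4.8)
The plan is to reduce the alarm event $\{q_m(W)\le\alpha\}$ to a statement about the rank of $s(W)$ in the calibration set, rewrite the conditional false positive rate $Z_N$ as a function of a single order statistic of $\mathcal{S}_N$, and then pin down its law via the probability integral transform while invoking stochastic dominance to pass from $\tilde F$ to $F$.

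First I would unfold the p-value in Eq.~\eqref{eq:p_conformal}. Writing $c=|\{s\in\mathcal{S}_N:s<s(W)\}|$ and $l=\lfloor\alpha(N-m+1)\rfloor$, the inequality $q_m(W)\le\alpha$ rearranges to $\min\{c,\,N-m\}\ge(1-\alpha)(N-m+1)-1$; since the left-hand side is an integer and $\lceil(N-m)-\alpha(N-m+1)\rceil=N-m-l$, this is equivalent to $\min\{c,\,N-m\}\ge N-m-l$, and because $l\ge0$ the $N-m$ branch is never binding, so the alarm fires exactly when $c\ge N-m-l$. As $F$ is continuous there are no ties almost surely, so $c\ge N-m-l$ is in turn equivalent to $s(W)>s_{(N-m-l)}$, where $s_{(j)}$ denotes the $j$-th smallest element of $\mathcal{S}_N$. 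This is the step I expect to demand the most care, since the interplay of the $\min$ and the floor must be tracked so that the threshold lands exactly on the order statistic $s_{(N-m-l)}$.

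Next, conditioning on $\mathcal{S}_N$ and using that $s(W)\sim\tilde F$ is independent of the calibration set, I would write $Z_N=\mathbb{P}(s(W)>s_{(N-m-l)}\mid\mathcal{S}_N)=1-\tilde F(s_{(N-m-l)})$. The first-order stochastic dominance hypothesis gives $F(x)\le\tilde F(x)$ for every $x$, so $Z_N\le 1-F(s_{(N-m-l)})=:Y_N$ almost surely. It then remains to identify the law of $Y_N$: by the probability integral transform the variables $F(s_i)$ are i.i.d.\ $\mathrm{Uniform}(0,1)$, and since $F$ is monotone, $F(s_{(N-m-l)})$ is their $(N-m-l)$-th order statistic, which follows $\mathrm{Beta}(N-m-l,\,m+l+1)$. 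Consequently $Y_N=1-F(s_{(N-m-l)})\sim\mathrm{Beta}(m+l+1,\,N-m-l)$, i.e.\ $Y_N$ has exactly the law of $B$.

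Finally I would conclude by noting that the almost-sure pointwise bound $Z_N\le Y_N$ upgrades to stochastic dominance: for every real $q$ the inclusion $\{Y_N\le q\}\subseteq\{Z_N\le q\}$ yields $\mathbb{P}(Z_N\le q)\ge\mathbb{P}(Y_N\le q)=\mathbb{P}(B\le q)$, which is precisely $B\succeq_1 Z_N$. The only genuine obstacle is the first reduction of the p-value to the rank condition; the remainder is the standard order-statistic plus probability-integral-transform computation, together with the elementary fact that a pointwise inequality between random variables implies first-order stochastic dominance.
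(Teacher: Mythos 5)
Your proposal is correct, and its skeleton matches the paper's: both reduce the alarm event to the rank condition $s(W) > s_{(N-m-l)}$, so that $Z_N = 1-\tilde F(s_{(N-m-l)})$, and both then convert the stochastic-dominance hypothesis plus order-statistic facts into the Beta bound. Where you differ is in how that last conversion is executed. The paper computes the CDF of $Z_N$ directly: it passes from $\tilde F$ to $F$ through the quantile-function inequality (dominance of $F$ over $\tilde F$ implies the reverse inequality for the inverses), then evaluates $\mathbb{P}\big(s_{(N-m-l)} \geq F^{-1}(1-z)\big)$ with the binomial formula for the CDF of an order statistic, recognizing the resulting sum as the incomplete Beta function $I_{(z)}(m+l+1, N-m-l)$. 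You instead build an almost-sure coupling: $Z_N \leq Y_N := 1 - F(s_{(N-m-l)})$ pointwise, identify the law of $Y_N$ as ${\rm Beta}(m+l+1, N-m-l)$ via the probability integral transform, and observe that pointwise domination implies first-order stochastic dominance. Your route is somewhat cleaner: it avoids generalized-inverse manipulations (which require right-continuity care, since $\tilde F$ is not assumed continuous in the theorem) and the binomial algebra, at the cost of citing the standard fact that uniform order statistics are Beta-distributed --- which is exactly what the paper's binomial expansion re-derives. A minor additional merit of your write-up is that the floor/min interplay in the rank-reduction step is tracked explicitly, whereas the paper asserts the corresponding event decomposition without detail; one small simplification available to you is that the equivalence between $|\{s \in \mathcal{S}_N : s < s(W)\}| \geq N-m-l$ and $s_{(N-m-l)} < s(W)$ is purely deterministic, so the appeal to absence of ties is not actually needed.
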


\begin{proof}[Proof (sketch)]
We first express the false positive rate $Z_N$ in terms of the order statistics on the elements of the random calibration set $\mathcal{S}_N$. Then, we derive the lower bound of the CDF of $Z_N$ using the distribution of order statistics followed by a binomial expansion.
\end{proof}

\noindent\fcolorbox{white}{shadecolor}{%
\begin{minipage}{\textwidth}
\textbf{Remarks:} The assumption that $F$ dominates $\tilde{F}$ in Thm. \ref{thm:fpr} generally holds in practice.
Again, this is because the actual benign classifiers to be inspected are usually trained on more abundant data than the benign shadow models.
Empirical results supporting this assumption are shown in Sec. \ref{subsec:exp_additional}.
Moreover, an analysis of this phenomenon is presented in App. \ref{sec:dominance}, where we show on binary Bayes classifiers that a higher empirical loss can easily lead to a larger expected LDP.
\end{minipage}}

While we have shown that $Z_N$ conditioned on a random calibration set of size $N$ is upper bounded by a Beta random variable in the sense of first-order stochastic dominance, in Col. \ref{col:asymp} below, we show that asymptotically, the upper bound of $Z_N$ converges to a constant in probability as $N \to \infty$.

\begin{corollary}\label{col:asymp}
\textbf{(Asymptotic Property of FPR)} Consider the settings in Thm. \ref{thm:fpr}. For any $\xi>0$, $\lim_{N \to +\infty} \mathbb{P} (Z_N\leq \tau)=1$, where $\tau=\alpha + (1-\alpha)\beta + \xi$ with $\beta=m/N$.
\end{corollary}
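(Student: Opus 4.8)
The plan is to reduce the claim to a concentration statement about the dominating Beta variable $B\sim\mathrm{Beta}(m+l+1,\,N-m-l)$ supplied by Thm.~\ref{thm:fpr}, and then to show that $B$ concentrates strictly below $\tau$ as $N\to\infty$. Since Thm.~\ref{thm:fpr} gives $B\succeq_1 Z_N$, we have $\mathbb{P}(Z_N\leq\tau)\geq\mathbb{P}(B\leq\tau)$ for every $N$; as probabilities never exceed $1$, it suffices to prove $\lim_{N\to\infty}\mathbb{P}(B\leq\tau)=1$ and then invoke a squeeze. Throughout I treat $m$ as growing with $N$ so that $\beta=m/N$ stays fixed, which is the regime making $\tau$ a genuine constant.

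Next I would compute the first two moments of $B$. Writing $a=m+l+1$ and $b=N-m-l$ with $l=\lfloor\alpha(N-m+1)\rfloor$, the decisive observation is that $a+b=N+1$ independently of $m$ and $l$, so the mean is $\mathbb{E}[B]=a/(a+b)=(m+l+1)/(N+1)$ and the variance is $\mathrm{Var}(B)=ab/((N+1)^2(N+2))$. The key limit is that of the mean: substituting $m=\beta N$ and $l=\lfloor\alpha N(1-\beta)+\alpha\rfloor$, the floor and the $O(1)$ additive offsets become negligible after dividing by $N+1$, yielding $\mathbb{E}[B]\to\beta+\alpha(1-\beta)=\alpha+(1-\alpha)\beta=\tau-\xi$. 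For the variance, the AM--GM bound $ab\leq((a+b)/2)^2=(N+1)^2/4$ gives $\mathrm{Var}(B)\leq\frac{1}{4(N+2)}\to0$.

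With these two facts, concentration follows from Chebyshev's inequality. Fix $\xi>0$; since $\mathbb{E}[B]\to\tau-\xi$, there is $N_0$ with $\mathbb{E}[B]\leq\tau-\xi/2$ for all $N\geq N_0$, and then
\[
\mathbb{P}(B\leq\tau)\;\geq\;\mathbb{P}\!\left(|B-\mathbb{E}[B]|<\tfrac{\xi}{2}\right)\;\geq\;1-\frac{4\,\mathrm{Var}(B)}{\xi^2}\;\geq\;1-\frac{1}{(N+2)\,\xi^2},
\]
where the first inequality uses that $|B-\mathbb{E}[B]|<\xi/2$ forces $B<\mathbb{E}[B]+\xi/2\leq\tau$. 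The right-hand side tends to $1$, and combining this with $\mathbb{P}(Z_N\leq\tau)\geq\mathbb{P}(B\leq\tau)$ and $\mathbb{P}(Z_N\leq\tau)\leq1$ forces $\lim_{N\to\infty}\mathbb{P}(Z_N\leq\tau)=1$.

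The main obstacle I anticipate is bookkeeping rather than anything conceptual: carefully tracking the floor function in $l$ and the $+1$ offsets so that the mean of $B$ lands exactly on $\tau-\xi$ rather than some nearby constant, and being explicit about the regime in which $\beta=m/N$ is held constant as $N\to\infty$. Once those are pinned down, everything reduces to the elementary moment formulas for the Beta distribution and a single application of Chebyshev's inequality.
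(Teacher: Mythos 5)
Your proposal is correct and follows essentially the same route as the paper's proof: reduce to the dominating $\mathrm{Beta}(m+l+1,\,N-m-l)$ variable, compute its mean $(m+l+1)/(N+1)$ and variance, apply Chebyshev's inequality, and conclude by a squeeze using $\mathbb{P}(Z_N\leq\tau)\geq\mathbb{P}(B\leq\tau)$. The only differences are cosmetic bookkeeping choices --- your AM--GM bound $\mathrm{Var}(B)\leq\tfrac{1}{4(N+2)}$ and the ``eventually $\mathbb{E}[B]\leq\tau-\xi/2$'' threshold replace the paper's explicit $\tfrac{\alpha+1}{N}$ offset and continuity argument, and both handle the floor function equally well.
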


\begin{proof}[Proof (sketch)]
We show that any random variable $B$ with the Beta distribution described in Thm. \ref{thm:fpr} satisfies $\lim_{N \to +\infty} \mathbb{P} (B\leq \tau)=1$. Then, the corollary is proved since $Z_N$ is dominated by $B$.
\end{proof}
\vspace{-0.1in}
\noindent\fcolorbox{white}{shadecolor}{%
\begin{minipage}{\textwidth}
\textbf{Remarks:}
For classical conformal prediction without adjustment where $m=0$, we will have $\beta=0$ and $\lim_{N \to +\infty} \mathbb{P} (Z_N\leq \alpha + \xi)=1$ for any $\xi>0$.
In this case, the upper bound of the false positive rate of our CBD converges to the prescribed significance level $\alpha$ in probability.
\end{minipage}}

\vspace{-0.05in}
\section{Experiment}\label{sec:exp}
\vspace{-0.025in}

There have been many different types of backdoor attacks proposed, each also with a wide range of configurations.
Thus, it is infeasible to evaluate our CBD over the entire space of backdoor attacks.
Inspired by the evaluation protocol for certified robustness \cite{randomized_smoothing}, in Sec. \ref{subsec:exp_certification}, we focus on backdoor attacks with random perturbation triggers to comprehensively evaluate the certification capability of our CBD.
In Sec. \ref{subsec:exp_detection}, we compare CBD with three state-of-the-art backdoor detectors (all ``uncertified'') against backdoor attacks with three popular trigger types to evaluate the detection capability of CBD.
In Sec. \ref{subsec:ablation}, we present ablation studies (e.g. on the number of shadow models used by CBD).
Additional results and other supportive empirical analyses are shown in Sec. \ref{subsec:exp_additional}.

\vspace{-0.025in}
\subsection{Evaluation of CBD Certification}\label{subsec:exp_certification}
\vspace{-0.025in}

For certified robustness, the prediction of a test example is unchanged with a guarantee, if the magnitude of the adversarial perturbation is smaller than the \textit{certified radius} \cite{randomized_smoothing}.
Certified robustness is usually evaluated by the \textit{certified accuracy} on some random test set as the proportion of the samples that are guaranteed to be correctly classified if the magnitude of the adversarial perturbation is no larger than some prescribed value.
As an analog, our certification for backdoor detection is specified by an inequality that involves both the STR and the perturbation magnitude of the trigger, which naturally produces a two-dimensional ``\textit{certified region}'' (illustrated in Fig. \ref{fig:exp_certified_region} and Fig. \ref{fig:exp_certified_region_large} in Apdx. \ref{sec:exp_certification_additional}).
Our certification method is evaluated on a set of random backdoor attacks, each using a random pattern as the trigger, with the perturbation magnitude satisfying some $\ell_2$ constraint.
We are interested in the proportion of the attacks falling into the certified region, i.e. guaranteed to be detected.

\vspace{-0.0125in}
\subsubsection{Setup}\label{subsubsec:exp_certification_setup}
\vspace{-0.0125in}

\textbf{Dataset:} Our experiments are conducted on four benchmark image datasets, GTSRB \cite{GTSRB}, SVHN \cite{SVHN}, CIFAR-10 \cite{CIFAR10}, and TinyImageNet \cite{ImageNet}, following their standard train-test splits.
Due to the large number of models that will be trained to evaluate our certification method, except for GTSRB, we use 40\% of the training set to train these models.
We also reserve 5,000 samples from the test set of GTSRB, SVHN, and CIFAR-10, and 10,000 samples from the test set of TinyImageNet (much smaller than the training size for the models for evaluation) for the defender to train the shadow models.
More details about these datasets are deferred to App. \ref{subsec:exp_settings_datasets}.

\textbf{Evaluation Metric:} Following the convention, the detection performance of CBD is evaluated by the true positive rate (\textbf{TPR}) defined by the proportion of attacks being detected (with a correct inference of the target class) and the false positive rate (\textbf{FPR}) defined by the proportion of benign classifiers falsely detected.
For certification, we define certified TPR (\textbf{CTPR}) as the proportion of a set of attacks that are guaranteed to be detectable, i.e. falling into the certified region.

\textbf{Attack Setting:} For each dataset, we create 50 backdoor attacks, each with a randomly selected target class.
For each attack, we generate a random trigger, which is a random perturbation embedded by $\delta(x)=x+v$ with $||v||_2\leq0.75$.
The generation process also involves random nullification of the trigger pattern, which helps the trigger to be learned.
More details about trigger generation are deferred to App. \ref{subsec:exp_settings_triggers_cert} due to space limitations.
The poisoning ratios for the attacks on GTSRB, SVHN, CIFAR-10, and TinyImageNet are 7.8\%, 15.3\%, 11.3\%, and 12.4\%, respectively.
Lower poisoning ratios may largely reduce the attack success rate since many randomly generated triggers are relatively hard to learn.
Results for triggers with larger perturbation size are shown in App. \ref{sec:exp_certification_additional}.

\textbf{Training:} For model architecture, we use the winning model on the leaderboard \cite{GTSRB_Leaderboard} for GTSRB, MobileNetV2 \cite{MobileNetV2} for SVHN, the same architecture in \cite{META} for CIFAR-10, and ResNet-34 \cite{ResNet} for TinyImageNet.
For each dataset, 50 benign models are trained (also on the 40\% training set except for GTSRB) to evaluate the FPR.
The accuracies for these benign models are roughly 98\%, 92\%, 78\%, and 47\% on GTSRB, SVHN, CIFAR-10, and TinyImageNet, respectively.
For each attack, we train a model with $\geq90\%$ attack success rate and $\leq2\%$ degradation in the benign accuracy (or re-generate the attack for training until both conditions are satisfied).
More details about the training configurations are shown in App. \ref{subsec:exp_settings_training}.
Finally, we train 100 shadow models for each of GTSRB, SVHN, and CIFAR-10, and 50 shadow models for TinyImageNet using the same architectures and configurations as above -- these shadow models are used by our CBD for detection and certification.

\begin{figure}[t]
\vspace{-0.1in}
\centering
\begin{minipage}{\textwidth}
    \centering
    \includegraphics[width=\linewidth]{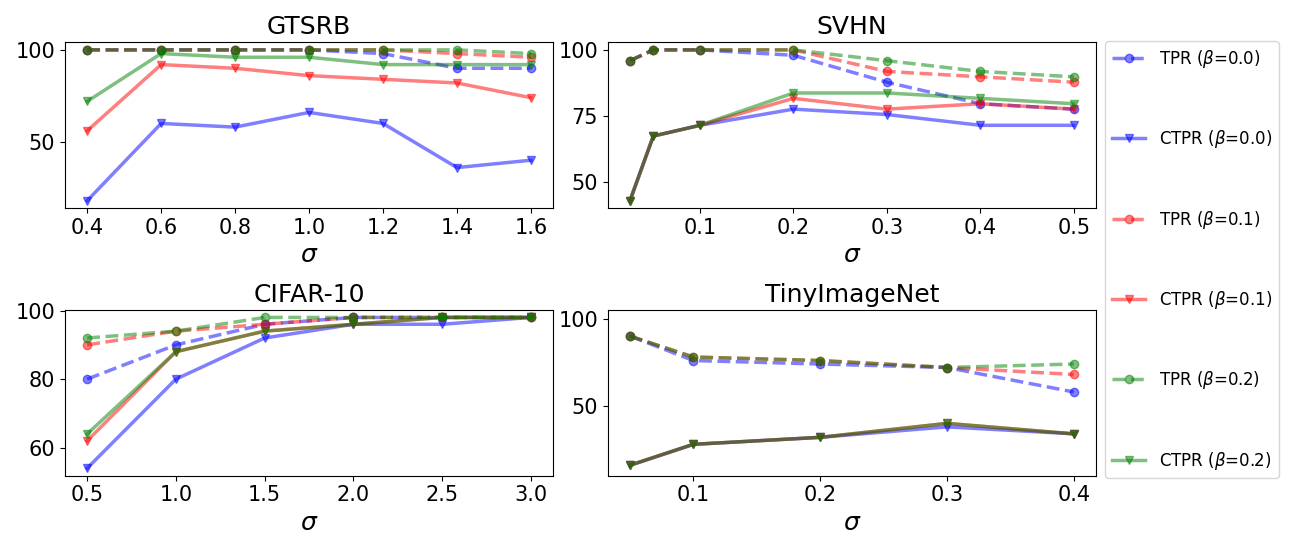}
\end{minipage}
\vspace{-0.075in}
\caption{Certification performance of CBD against backdoor attacks with random triggers with perturbation magnitude $\ell_2\leq0.75$ measured by CTPR (solid) for a range of $\sigma$ for $\beta=0, 0.1, 0.2$.
The CTPRs are all upper-bounded by the TPRs (dashed), showing the correctness of our certification.
Notably, CBD achieves up to 98\% (100\%), 84\% (100\%), 98\%
(98\%), and 40\% (72\%) CTPRs (TPRs) on GTSRB, SVHN, CIFAR-10, and
TinyImageNet, respectively, across all choices of $\sigma$ and $\beta$.
An increment in $\beta$, the assumed ratio of calibration outliers, may lead to further increments in both CTPR and TPR.
The hyperparameter $\sigma$ can be determined using the calibration set in practice.
}\label{fig:exp_certification}
\vspace{-0.15in}
\end{figure}

\subsubsection{Certification Performance}\label{subsubsec:exp_certification_performance}

In Fig. \ref{fig:exp_certification}, we show the CTPR (solid) of our CBD against the attacks with the random perturbation trigger on the four datasets for a range of $\sigma$ and for $\beta=m/N\in\{0, 0.1, 0.2\}$.
The TPR (dashed) for each combination of $\sigma$ and $\beta$ is also plotted for reference.
Recall that $\beta$ here represents the proportion of assumed outliers in the calibration set for the adjustment of the conformal prediction, and $\sigma$ is the standard deviation of the isotropic Gaussian distribution for the LDP computation.
In our experiments, 1024 random Gaussian noises are generated for each sample used to compute the LDP.
The significance level for conformal prediction is set to the classical $\alpha=0.05$ for statistical testing.

In general, our certification is effective, which covers up to 98\%, 84\%, and 98\% backdoor attacks with the random perturbation trigger (all achieved with $\beta=0.2$) on GTSRB, SVHN, and CIFAR-10, respectively.
Even for the very challenging TinyImageNet dataset, CBD certifies up to 40\% of these attacks.
Moreover, for all choices of $\beta$ and $\sigma$, CTPR is upper-bounded by TPR.
For example, for the aforementioned CTPRs on the four datasets, the corresponding TPRs are 100\%, 100\%, 98\%, and 72\%, respectively (with FPRs 0\%, 0\%, 6\%, and 10\%, respectively, see App. \ref{sec:exp_certification_additional}).
In fact, all attacks with the detection guarantee are detected empirically, showing the correctness of our certification.

We also make the following observations regarding the hyperparameters $\beta$ and $\sigma$:
1) An increment in $\beta$ may lead to an increment in both CTPR and TPR.
This is due to the existence of a heavy tail (corresponding to large outliers) in the LDP distribution for the calibration set.
While an overly large $\beta$ may cause a significant increment to FPR, our additional results in App. \ref{sec:exp_certification_additional} show that empirically, this is not the case for $\beta\leq0.2$.
2) Each domain has its own proper range for the choice of $\sigma$.
In general, the detection power of CBD (reflected by TPR) reduces as $\sigma$ becomes overly small.
This phenomenon agrees with our remarks on Def. \ref{def:slpv} -- SLPV converges to a singleton at the labeled class as $\sigma$ approaches zero, regardless of the presence of a backdoor attack.
Moreover, the certification power (reflected by CTPR) also reduces for small $\sigma$'s, which matches the inequality \eqref{eq:certification_tp} in Thm. \ref{thm:tp}.
For overly large $\sigma$'s, on the other hand, LDP for benign classifiers may also grow and possibly approach one, resulting in a large FPR.
However, in this case, LDP is no longer computed in a truly `local' context, contrary to the intuition implied by its name.
In Sec. \ref{subsec:exp_detection}, we will introduce a practical scheme to choose a proper $\sigma$ for each domain based on the shadow models.

\subsection{Evaluation of CBD Detection}\label{subsec:exp_detection}

Here, we show the detection performance of CBD against backdoor attacks with various trigger types, including the BadNet square \cite{BadNet}, the ``chessboard'' (CB) pattern \cite{Post-TNNLS}, and the blended pattern \cite{Targeted}.
For each of GTSRB, SVHN, and CIFAR-10, we train 20 models (using the full training set) for each trigger type following the same training configurations described in Sec. \ref{subsubsec:exp_certification_setup}.
TinyImageNet is not considered here due to the high training cost.
Details for each trigger type and attack settings are shown in App. \ref{subsec:exp_settings_triggers_detection}.
We also compare CBD with three state-of-the-art backdoor detectors without certification, which are Neural Cleanse (NC) \cite{NC}, K-Arm \cite{Shen2021BackdoorSF}, and MNTD \cite{META}.
In particular, K-Arm and MNTD require manual selection of the detection threshold.
For both of them, we choose the threshold that maximizes the TPR while keeping a 5\% FPR for each dataset.
Moreover, we devise a ``supervised'' version of CBD, named CBD\textsubscript{sup}, which still uses LDP as the detection statistic but without the conformal prediction.
The detection threshold for CBD\textsubscript{sup} is determined in the same way as for K-Arm and MNTD, i.e. by maximizing the TPR with a controlled 5\% FPR.

In practice, CBD needs to choose a moderately large $\sigma$ for each detection task.
To this end, we first initialize a small $\sigma$ such that for each of the $N$ shadow models, the SLPVs for the $K$ samples used for computing the LDP all concentrate at the labeled classes.
In this case, the LDPs for all the shadow models are close to $\frac{1}{K}$.
Then, we gradually increase $\sigma$ until $\frac{1}{N\times K}\sum_{n=1}^N \sum_{k=1}^K p_k(x^{(n)}_k|w_n, \sigma)<\psi$ for some relatively small $\psi$, where $x^{(n)}$ is the $k$-th sample for LDP computation for the $n$-th model, i.e. the SLPVs are no longer concentrated at the labeled classes.
In the left of Fig. \ref{fig:additional}, we show the choice of $\sigma$ based on the above scheme for a range of $\psi$, which exhibits a trend of convergence as $\psi$ decreases.
We also notice that $\sigma$ selected for $\psi<0.2$ roughly matches the $\sigma$ choices in Fig. \ref{fig:exp_certification} that yields relatively high CTPR and TPR, showing the effectiveness of our scheme.
In our evaluation of CBD detection, we set $\psi=0.1$, which yields $\sigma=1.15, 0.39, 1.14$ for GTSRB, SVHN, and CIFAR-10, respectively.
Other choices of $\sigma$ for $\psi$ less than $0.2$ yield similar detection performance.

As shown in Tab. \ref{tab:exp_detection}, CBD achieves comparable or even higher TPRs than the SOTA detectors (that benefit from unrealistic supervision) for all trigger types on all datasets.
CBD also provides non-trivial detection guarantees to most attack types on these datasets.
The relatively low CTPRs, e.g. for BadNet on GTSRB, are due to the large perturbation magnitude of the trigger.
The even better TPRs achieved by CBD\textsubscript{sup}, though with the same supervision as for the SOTA detectors, show the effectiveness of the LDP statistic in distinguishing backdoored models from benign ones.
Such effectiveness is further verified by the receiver operating characteristic (ROC) curves for CBD\textsubscript{sup}.
Compared with the baseline detectors, CBD\textsubscript{sup} achieves generally higher overall areas under curves (AUCs) across the three datasets.

\begin{table}[!t]
\vspace{-0.1in}
\caption{
Certified detection of CBD for $\beta=0, 0.1, 0.2$ (shaded), with the empirical detection performance (measured by TPR (\%)) compared with NC, K-Arm, MNTD, and CBD\textsubscript{sup} against BadNet, CB, and Blend attacks on GTSRB, SVHN, and CIFAR-10.
The parentheses in each shaded cell contain the CTPR (\%) associated with the TPR outside the parentheses.
CBD achieves comparable or even higher empirical TPRs compared with the SOTA baselines and provides non-trivial (or even tight) certification for different attacks and datasets.
FPRs (\%) are reported on benign classifiers.
} 
\begin{center}\resizebox{\textwidth}{!}{
    \setlength{\tabcolsep}{4pt}
    \begin{tabular}{cccccccccccccc}
        \toprule
        & \multicolumn{4}{c}{GTSRB} & \multicolumn{4}{c}{SVHN} & \multicolumn{4}{c}{CIFAR-10} & Average\\
        \cmidrule(lr){2-5} \cmidrule(lr){6-9} \cmidrule(lr){10-13}
        & benign & BadNet & CB & Blend & benign & BadNet & CB & Blend & benign & BadNet & CB & Blend & TPR\\
        \hline \hline
        NC &20 &50 &75 &20 & 40 & 80 & 100 & 95 &20 & 35 & 95 & 60 & 67.8\\
        K-Arm & 5 & 100 &  100 & 100 & 5 & 100 & 70 & 40 &5 & 100 & 80 & 55 & 82.8\\
        MNTD & 5 & 20 & 0 & 0 & 5 & 10 & 10 & 15 & 5 & 90 & 100 & 75 & 35.6\\
        CBD\textsubscript{sup} & 5 & 100 & 95 & 100 & 5 & 100 & 100 & 90 & 5 & 65 & 100 & 55 & \textbf{89.4}\\
        CBD\textsubscript{0} & 0 & \cellcolor{gray!25}75 (5) & \cellcolor{gray!25}95 (80) & \cellcolor{gray!25}80 (20) & 0 & \cellcolor{gray!25}75 (45) & \cellcolor{gray!25}100 (100) & \cellcolor{gray!25}80 (75) & 0 & \cellcolor{gray!25}50 (5) & \cellcolor{gray!25}100 (90) & \cellcolor{gray!25}45 (30) & 77.2\\
        CBD\textsubscript{0.1} & 0 & \cellcolor{gray!25}90 (15) & \cellcolor{gray!25}95 (85) & \cellcolor{gray!25}90 (25) & 0 & \cellcolor{gray!25}90 (55) & \cellcolor{gray!25}100 (100) & \cellcolor{gray!25}80 (80) & 20 & \cellcolor{gray!25}75 (20) & \cellcolor{gray!25}100 (95) & \cellcolor{gray!25}55 (35) & 86.1\\
        CBD\textsubscript{0.2} & 0 & \cellcolor{gray!25}90 (15) & \cellcolor{gray!25}95 (85) & \cellcolor{gray!25}95 (35) & 0 & \cellcolor{gray!25}95 (65) & \cellcolor{gray!25}100 (100) & \cellcolor{gray!25}90 (80) & 25 & \cellcolor{gray!25}75 (25) & \cellcolor{gray!25}100 (100) & \cellcolor{gray!25}60 (40) & \textbf{88.9}\\
        \bottomrule
    \end{tabular}
    }
    \label{tab:exp_detection}
    \vspace{-0.05in}
\end{center}
\end{table}

\begin{figure}[t]
\centering
\begin{minipage}{\textwidth}
    \centering
    \includegraphics[width=\linewidth]{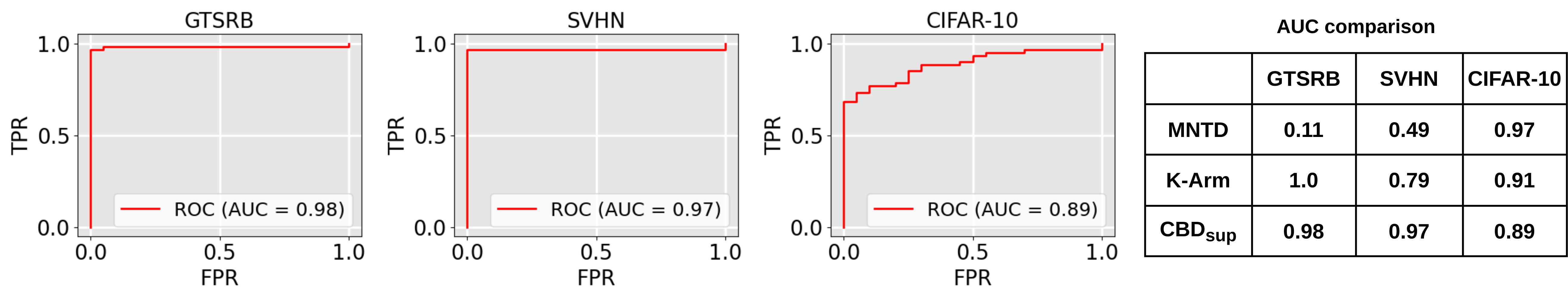}
\end{minipage}
\caption{Receiver operating characteristic (ROC) curves of CBD\textsubscript{sup} aggregated over all three trigger types on GTSRB, SVNH, and CIFAR-10, respectively.
CBD\textsubscript{sup} with our proposed LDP statistic achieves higher overall areas under curves (AUCs) than K-Arm and MNTD across the three datasets.}
\label{fig:roc}
\vspace{-0.1in}
\end{figure}

\begin{figure}[t]
    \centering
    \begin{minipage}{.23\textwidth}
        \centering
        \includegraphics[width=\linewidth]{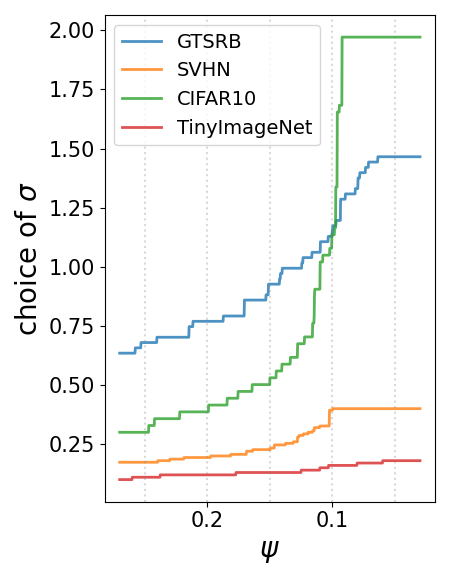}
    \end{minipage}
    \begin{minipage}{.52\textwidth}
        \centering
        \includegraphics[width=\linewidth]{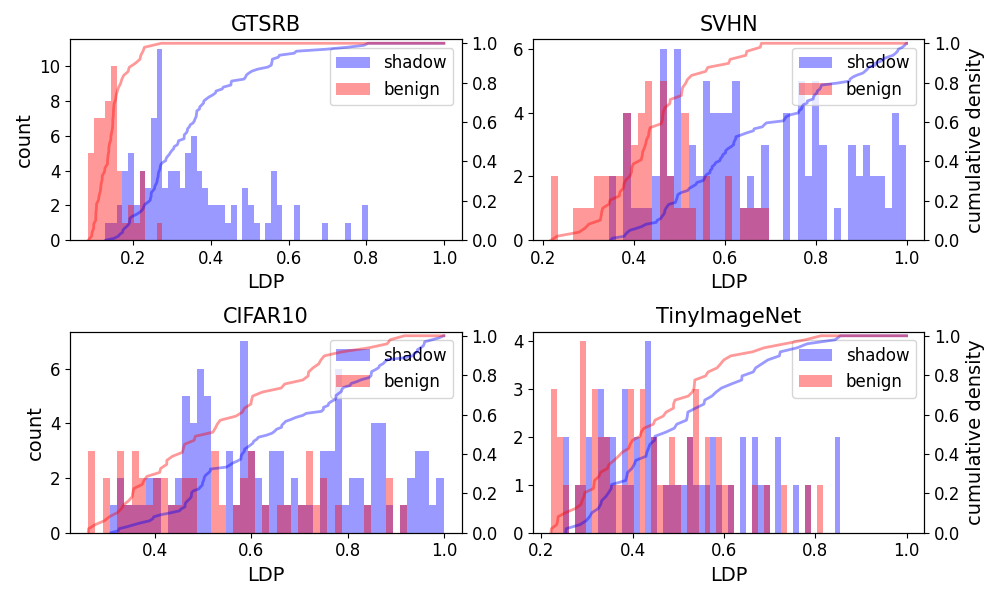}
    \end{minipage}
    \begin{minipage}{.23\textwidth}
        \centering
        \includegraphics[width=\linewidth]{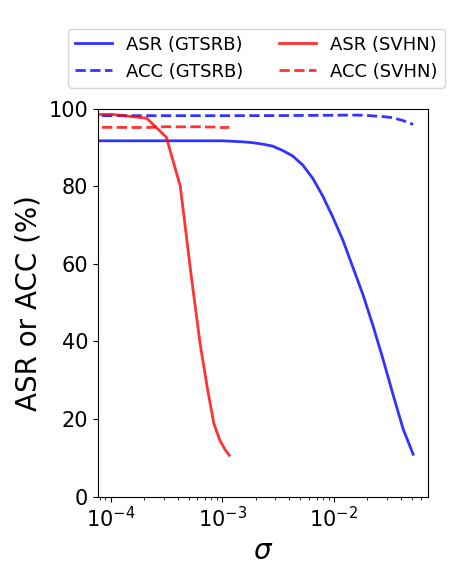}
    \end{minipage}
    \vspace{-0.075in}
    \caption{Supportive results: (\textbf{Left}) Choice of $\sigma$ for a range of $\psi$ based on our selection scheme, which matches the $\sigma$ choices in Fig. \ref{fig:exp_certification} with high CTPR and TPR. (\textbf{Middle}) The histograms of the LDP statistics for the shadow models and the benign models, with the associated empirical CDFs. LDP for benign models is stochastically dominated by the LDP for shadow models for all datasets. (\textbf{Right}) Vulnerability of WaNet attack on GTSRB and SVHN. Attack success rate (ASR) reduces with a negligible drop in benign accuracy (ACC) when inputs are smoothed by Gaussian noise.}\label{fig:additional}
    \vspace{-0.1in}
\end{figure}

\subsection{Ablation Study}\label{subsec:ablation}

We show that the time efficiency and data efficiency of CBD can be improved by training fewer shadow models and using fewer samples for training the shadow models, respectively, without significant degradation in the detection or certification performance.
In particular, we show in Tab. \ref{tab:exp_ablation_model} that CBD with $\beta=0.2$ achieves similar TPRs and CTPRs for the same set of attacks in Sec. \ref{subsec:exp_detection} when we reduce the number of shadow models from 100 to 50, 25, and 10, respectively.
Such robustness of CBD to the calibration size further verifies the clear separation between the benign and backdoored classifiers using our proposed LDP.
In Tab. \ref{tab:exp_ablation_sample}, we show that with shadow models trained on only 100 clean samples per class, CBD requires a larger $\beta$ to achieve similar TPRs and CTPRs.
This is because more shadow models will exhibit an abnormally large LDP due to significantly insufficient training.

\begin{table}[!t]
\caption{Certified detection of CBD (with CTPR inside the parentheses and TPR outside) for $\beta=0.2$ using 10, 25, 50, and 100 (the default in Sec. \ref{subsec:exp_detection}) shadow models.
Both the detection and certification performance of CBD are not significantly affected by reducing the number of shadow models.}
\begin{center}\resizebox{\textwidth}{!}{
    \setlength{\tabcolsep}{4.3pt}
    \begin{tabular}{ccccccccccccc}
        \toprule
        & \multicolumn{4}{c}{GTSRB} & \multicolumn{4}{c}{SVHN} & \multicolumn{4}{c}{CIFAR-10}\\
        \cmidrule(lr){2-5} \cmidrule(lr){6-9} \cmidrule(lr){10-13}
        \# shadow& benign & BadNet & CB & Blend & benign & BadNet & CB & Blend & benign & BadNet & CB & Blend\\
        \hline \hline
        10 & 0 & 95 (10) & 95 (85) & 95 (35) & 0 & 75 (45) & 100 (100) & 90 (75) & 25 & 85 (25) & 100 (100) & 55 (40)\\
        25 & 0 & 95 (5) & 95 (85) & 95 (25) & 0 & 95 (45) & 100 (100) & 80 (75) & 25 & 80 (25) & 100 (100) & 70 (40)\\
        50 & 0 & 95 (15) & 95 (85) & 95 (40) & 0 & 95 (65) & 100 (100) & 85 (75) & 25 & 80 (25) & 100 (100) & 60 (40)\\
        100 & 0 & 90 (15) & 95 (85) & 95 (35) & 0 & 95 (65) & 100 (100) & 90 (80) & 25 & 75 (25) & 100 (100) & 60 (40)\\
        \bottomrule
    \end{tabular}
    }
    \label{tab:exp_ablation_model}
\end{center}
\end{table}

\begin{table}[!t]
\caption{Certified detection of CBD (with CTPR inside the parentheses and TPR outside) for $\beta=0.2$ and $\beta=0.4$, respectively, with 100 shadow models trained on fewer samples (100 per class) than the default settings in Sec. \ref{subsec:exp_detection}.
Due to the significantly insufficient training of the shadow models, a larger $\beta$ is needed to maintain a similar level of TPR and CTPR as in Sec. \ref{subsec:exp_detection}.}
\begin{center}\resizebox{\textwidth}{!}{
    \setlength{\tabcolsep}{4pt}
    \begin{tabular}{ccccccccccccc}
        \toprule
        & \multicolumn{4}{c}{GTSRB} & \multicolumn{4}{c}{SVHN} & \multicolumn{4}{c}{CIFAR-10}\\
        \cmidrule(lr){2-5} \cmidrule(lr){6-9} \cmidrule(lr){10-13}
        & benign & BadNet & CB & Blend & benign & BadNet & CB & Blend & benign & BadNet & CB & Blend\\
        \hline \hline
        CBD\textsubscript{0.2} & 0 & 90 (5) & 95 (85) & 90 (25) & 0 & 75 (45) & 100 (95) & 80 (75) & 0 & 35 (0) & 95 (90) & 45 (40)\\
        CBD\textsubscript{0.4} & 0 & 95 (5) & 95 (85) & 95 (35) & 0 & 95 (45) & 100 (100) & 85 (75) & 5 & 60 (10) & 100 (95) & 55 (40)\\
        \bottomrule
    \end{tabular}
    }
    \label{tab:exp_ablation_sample}
    \vspace{-0.1in}
\end{center}
\end{table}

\subsection{Additional Experiments}\label{subsec:exp_additional}

\textbf{Empirical validation of the stochastic dominance assumption in Thm. \ref{thm:fpr}.}
In the middle of Fig. \ref{fig:additional}, we show the histograms (with the associated empirical CDF) of the LDP statistics for the shadow models and the benign models for all four datasets.
The statistics for each dataset are obtained using the practically selected $\sigma$.
The LDP for the benign models is clearly dominated by the LDP for the shadow models in the sense of first-order stochastic dominance.

\textbf{Class imbalance is not the reason for a large LDP.}
In our experiments involving generally balanced datasets, backdoor poisoning (i.e. embedding the trigger in a large variety of samples during training) not only enhances the trigger robustness of the attack, but also introduces an imbalance in the poisoned training set.
Thus, it is important to show that the large LDPs observed in our experiments are a result of the robustness of the trigger, rather than the class imbalance.
Note that if class imbalance can also cause a large LDP, there will easily be false alarms for benign classifiers trained on imbalanced datasets\footnote{This is the case for the backdoor detector in \cite{wang2024mmbd} which leverages the overfitting to the backdoor trigger during training -- benign classifiers trained on imbalanced datasets will also easily trigger a (false) detection.}.
Here, we train two groups of benign models on SVHN, with 20 models per group.
For the first group, the models are trained on a balanced dataset with 3,000 images per class.
For each model in the second group, the training set contains 4,400 additional images labeled to some randomly selected class.
With the same model architecture and training configurations, the two groups have similar LDP distributions, with mean$\pm$std being 0.445$\pm$0.125 and 0.429$\pm$0.119, respectively, and with a 0.698 p-value for the t-test for mean.
Thus, LDP will not be affected by class imbalance in general, and our method indeed detects the backdoor attack rather than the class imbalance.

\textbf{Advanced attacks.}
Based on the key ideas in Sec. \ref{fig:key_ideas}, CBD requires the attack to have a large STR over the sample distribution.
However, this requirement is not always satisfied, especially for some advanced attacks with subtle, sample-specific triggers, such as WaNet \cite{nguyen2021wanet}.
Although not detectable or certifiable by our CBD, these attacks can hardly survive noises either from the environment or simple prepossessing-based defenses in practice.
To see this, for each of GTSRB and SVHN, we train 20 models with successful WaNet attacks.
We consider a simple test-time defense by applying 1024 randomly sampled Gaussian noises from distribution $\mathcal{N}(0, \sigma^2I)$ to each input and then performing a majority vote.
As shown in the right of Fig. \ref{fig:additional}, for both datasets and for small $\sigma$, the average attack success rate (ASR) quickly drops without clear scarifies in the benign accuracy (ACC).

\vspace{-0.025in}
\section{Conclusion}\label{sec:conclusion}
\vspace{-0.025in}

In this paper, we proposed CBD, the first certified backdoor detector, which is based on an adjustable conformal prediction using a novel LDP statistic.
CBD not only performs detection inference but also provides a condition for attacks that are guaranteed to be detectable.
Our theoretical results show that backdoor attacks with a trigger more resilient to noises and with a smaller perturbation magnitude are more likely to be detected with a guarantee.
Our empirical results show the strong certification and detection performance of CBD on four benchmark datasets.
In future research, we aim to enhance the certification bound to encompass backdoor attacks with larger trigger perturbation norms, such as rotational triggers and subject-based triggers.

\noindent\textbf{Acknowledgements} This work is partially supported by the National Science Foundation under grant No. 1910100, No. 2046726, No. 2229876, DARPA GARD, the National Aeronautics and Space Administration (NASA) under grant no. 80NSSC20M0229, Alfred P. Sloan Fellowship, the Amazon research award, and the eBay research award.

\bibliography{cbd}
\bibliographystyle{plain}


\newpage

\section*{Ethics Statement}\label{ethics_statement}

The main purpose of this research is to provide the user of DNN classifiers with a method to detect if the model is backdoor attacked without access to the training set.
All attacks used to evaluate our detection method in this paper are created by published backdoor attack strategies on public datasets.
Thus, we did not create new threats to society.
Moreover, our work provides a new perspective on backdoor defense, as it is the first to address the certification of backdoor detection.
It helps other researchers to understand the behavior of deep learning systems facing malicious activities.
Code is available at \url{https://github.com/zhenxianglance/CBD}.

\section*{Broader Impacts}

While existing backdoor detectors are all empirical \cite{NC, Tabor, Post-TNNLS, ABS, DataLimited, DeepInspect, Shen2021BackdoorSF, NC_blackbox}, our work initiates a new research direction -- backdoor detection with certification.
Moreover, we first exposed that certified backdoor detectors and certified robustness against backdoor attacks complement each other \cite{zhang2022bagflip, RAB, Jia2020CertifiedRO, label_flipping_certified}.
Relatively weak attacks, e.g., with a low poisoning rate or a trigger difficult to learn, are more likely to be defeated by robust training strategies with a guarantee.
On the other hand, strong backdoor attacks with robustly learned triggers that are resilient to noises are more likely to be detected by our method with a guarantee.
We hope that with the development of both branches of certified backdoor defense strategies, future backdoor attacks will be either strong enough to be detected with a guarantee or weak enough to be defeated during training with a guarantee.

\appendix

\section*{Appendix}

\section{Proof of Theorems}\label{sec:proof}

\subsection{Proof of Thm. \ref{thm:tp}}

The proof of Thm. \ref{thm:tp} is inspired by the proof of Theorem 1 in \cite{randomized_smoothing} which leverages the Neyman-Pearson lemma \cite{neyman_pearson}. In particular, we consider the special case of the lemma for isotropic Gaussian distributions as stated in \cite{randomized_smoothing}.

\begin{lemma}\label{lm:neyman_pearson}
(\textbf{Neyman-Pearson for Gaussians with Different Means \cite{randomized_smoothing}}) Let $X\sim\mathcal{N}(x, \sigma^2I)$ and $X'\sim\mathcal{N}(x-r, \sigma^2I)$. Let $h:\mathbb{R}^d\rightarrow\{0, 1\}$ be any deterministic or random function. If $\mathcal{A}=\{z\in\mathbb{R}:r^Tz\geq\phi\}$ for some $\phi$ and $\mathbb{P}(h(X)=1)\geq\mathbb{P}(X\in\mathcal{A})$, then $\mathbb{P}(h(X')=1)\geq\mathbb{P}(X'\in\mathcal{A})$.
\end{lemma}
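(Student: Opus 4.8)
The plan is to reduce the statement to the classical Neyman-Pearson lemma by recognizing the halfspace $\mathcal{A} = \{z : r^T z \geq \phi\}$ as a likelihood-ratio region for the pair $X \sim \mathcal{N}(x, \sigma^2 I)$ and $X' \sim \mathcal{N}(x - r, \sigma^2 I)$. Writing $\mu$ and $\nu$ for the densities of $X$ and $X'$, the first thing I would do is compute the ratio $\nu(z)/\mu(z)$. Expanding the Gaussian exponents, the quadratic terms $\|z\|^2$ cancel, leaving $\nu(z)/\mu(z) = \exp\big((-2 r^T(z - x) - \|r\|^2)/(2\sigma^2)\big)$, which is a strictly decreasing function of the scalar $r^T z$. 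Consequently $r^T z \geq \phi$ holds if and only if $\nu(z)/\mu(z) \leq t$, where $t = \exp((-2\phi + 2 r^T x - \|r\|^2)/(2\sigma^2)) > 0$. That is, $\mathcal{A} = \{z : \nu(z)/\mu(z) \leq t\}$ is exactly a sublevel set of the likelihood ratio with a strictly positive threshold.

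Given this identification, the second step is the Neyman-Pearson argument itself. Treating the possibly randomized test $h$ through its acceptance probability (abusing notation, writing $h(z)$ for $\mathbb{P}(h(z)=1) \in [0,1]$), I would establish the pointwise inequality $(h(z) - \mathbf{1}_{\mathcal{A}}(z))(\nu(z) - t\mu(z)) \geq 0$ for almost every $z$. On $\mathcal{A}$ both factors are nonpositive (since $h - 1 \leq 0$ and $\nu \leq t\mu$), and on $\mathcal{A}^c$ both are nonnegative (since $h \geq 0$ and $\nu > t\mu$), so the product is nonnegative in either case. Integrating this inequality against Lebesgue measure gives $\mathbb{P}(h(X')=1) - \mathbb{P}(X' \in \mathcal{A}) \geq t\,(\mathbb{P}(h(X)=1) - \mathbb{P}(X \in \mathcal{A}))$. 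The hypothesis $\mathbb{P}(h(X)=1) \geq \mathbb{P}(X \in \mathcal{A})$ together with $t > 0$ makes the right-hand side nonnegative, yielding the conclusion $\mathbb{P}(h(X')=1) \geq \mathbb{P}(X' \in \mathcal{A})$.

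There is no deep obstacle here, since the result is a standard specialization of Neyman-Pearson used in the randomized-smoothing literature \cite{randomized_smoothing}; the work is almost entirely bookkeeping. The one place that demands care is the orientation of the likelihood ratio: because $X'$ has mean shifted by $-r$, the ratio $\nu/\mu$ is \emph{decreasing} in $r^T z$, so the upper halfspace $\{r^T z \geq \phi\}$ corresponds to the sublevel set $\{\nu/\mu \leq t\}$ rather than a superlevel set, and this is precisely the orientation that makes the desired lower-bound conclusion come out in the correct direction. A secondary point is the measure-theoretic handling of randomized tests and of the boundary hyperplane $\{r^T z = \phi\}$; since the Gaussians are absolutely continuous, this hyperplane has Lebesgue measure zero, so ties contribute nothing and the pointwise argument goes through unchanged. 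Alternatively, one may simply invoke the general Neyman-Pearson lemma for densities and present only the first step, the likelihood-ratio computation, as the content specific to this Gaussian setting.
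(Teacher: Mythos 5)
Your proof is correct, but note that the paper does not actually prove this lemma at all: it is imported verbatim from \cite{randomized_smoothing} (where it appears as the Gaussian specialization of Neyman--Pearson used for randomized smoothing) and is then used as a black box in the proof of Thm.~\ref{thm:tp}. So what you have supplied is the self-contained argument that the paper delegates to its reference, and it is essentially the standard one found there: (i) the likelihood-ratio computation showing that $\mathcal{A}=\{z: r^Tz\geq\phi\}$ equals the sublevel set $\{z:\nu(z)/\mu(z)\leq t\}$ with $t=\exp\big((2r^Tx-\|r\|_2^2-2\phi)/(2\sigma^2)\big)>0$ (your orientation point is the crux: since the mean shift is $-r$, the ratio $\nu/\mu$ is decreasing in $r^Tz$, so the upper halfspace is a \emph{sublevel} set, which is exactly what makes the hypothesis $\mathbb{P}(h(X)=1)\geq\mathbb{P}(X\in\mathcal{A})$ propagate to the conclusion with the same inequality direction); and (ii) the pointwise inequality $(h(z)-\mathbf{1}_{\mathcal{A}}(z))(\nu(z)-t\mu(z))\geq 0$, integrated to give $\mathbb{P}(h(X')=1)-\mathbb{P}(X'\in\mathcal{A})\geq t\big(\mathbb{P}(h(X)=1)-\mathbb{P}(X\in\mathcal{A})\big)\geq 0$. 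Both steps check out, including your value of $t$ and the sign analysis on $\mathcal{A}$ and its complement. One cosmetic remark: when $r=0$ the ratio is constant rather than strictly decreasing, but the sublevel-set identification still holds with your $t$ and the claim is trivial since $X$ and $X'$ are equal in distribution, so this is not a gap. The trade-off between the two routes is simply economy versus self-containedness: the paper's citation suffices because the lemma is standard, whereas your write-up would make the appendix independent of the cited reference.
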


\begin{proof}[Proof of Thm. \ref{thm:tp}]
For each $k\in\{1, \cdots, K\}$, we defined $r_k=\delta(x_k)-x_k$ for convenience.
Also, assume an isotropic Gaussian noise $\epsilon\sim\mathcal{N}(0, \sigma^2I)$ with standard deviation $\sigma$.
Then, by the definition of STR in Def. \ref{def:str} and the definition of $\pi$ as the minimum STR, we have:
\begin{equation*}
R_{\delta, t}(x_k|w,\sigma) = \mathbb{P} (f(x_k + r_k + \epsilon; w) = t) \geq \pi
\end{equation*}
Now, we define a half space $\mathcal{A}_k=\{x:r_k^T (x-x_k-r_k) \geq \sigma||r_k||_2 \Phi^{-1}(1-\pi)\}$, such that:
\begin{align*}
\mathbb{P} (x_k+r_k+\epsilon \in \mathcal{A}_k) &= \mathbb{P} (r_k^T \epsilon \geq \sigma||r_k||_2 \Phi^{-1}(1-\pi))\\
&= \mathbb{P} (\sigma ||r_k||_2 Z \geq \sigma||r_k||_2 \Phi^{-1}(1-\pi)) \tag*{$\triangleright Z\sim\mathcal{N}(0, 1)$}\\
&= 1-\Phi(\Phi^{-1}(1-\pi))\\
&= \pi
\end{align*}
Thus:
\begin{align*}
\mathbb{P} (f(x_k + r_k + \epsilon; w) = t) & \geq \mathbb{P} (x_k+r_k+\epsilon \in \mathcal{A}_k)
\end{align*}
and based on Lemma \ref{lm:neyman_pearson}, we have:
\begin{equation}\label{eq:np_result}
\mathbb{P} (f(x_k + \epsilon; w) = t) \geq \mathbb{P} (x_k+\epsilon \in \mathcal{A}_k)
\end{equation}
Based on the definition of LDP in Def. \ref{def:ldp}, and for the given samples $x_1, \cdots, x_k$ used for LDP computation, we have:
\begin{align*}
s(w) & = || \frac{1}{K} \sum_{k=1}^K \boldsymbol{p}(x_k|w, \sigma) ||_{\infty}\\
& \geq \frac{1}{K} \sum_{k=1}^K p_t(x_k|w, \sigma)\\
& = \frac{1}{K} \sum_{k=1}^K \mathbb{P} (f(x_k + \epsilon; w) = t) \tag*{$\triangleright$ Def. \ref{def:slpv}}\\
& \geq \frac{1}{K} \sum_{k=1}^K \mathbb{P} (r_k^T (\epsilon-r_k) \geq \sigma||r_k||_2 \Phi^{-1}(1-\pi)) \tag*{$\triangleright$ Inequality \eqref{eq:np_result}}\\
& = \frac{1}{K} \sum_{k=1}^K \mathbb{P} (Z \geq \Phi^{-1}(1-\pi) + \frac{||r_k||_2}{\sigma})\\
& \geq \frac{1}{K} \sum_{k=1}^K \mathbb{P} (Z \geq \Phi^{-1}(1-\pi) + \frac{\Delta}{\sigma}) \tag*{$\triangleright \Delta=\max_{k=1,\cdots,K} ||r_k||_2$}\\
& = 1-\Phi(\Phi^{-1}(1-\pi) + \frac{\Delta}{\sigma})
\end{align*}
Since the attack is detected if $q_m(w)\leq\alpha$, which is true when $s(w)>s_{(N-m-\lfloor \alpha(N-m+1) \rfloor)}$ is satisfied based on Eq. \eqref{eq:p_conformal}. Thus, the detection of the attack is guaranteed if inequality \eqref{eq:certification_tp} holds.
\end{proof}

\subsection{Proof of Thm. \ref{thm:fpr}}

\begin{proof}[Proof of Thm. \ref{thm:fpr}]
Let $s_{(1)}, \cdots, s_{(N)}$ be the order statistics associated with $s_1, \cdots, s_N$ and denote $S=s(W)$ for simplicity. Then, the false positive rate can be represented as the following:
\begin{align*}
&\mathbb{P}(q_m(W)\leq \alpha|\mathcal{S}_N)\\
=&\mathbb{P}({\rm min}\{|\{s\in\mathcal{S}_N:s < S\}|, N-m\} \geq N-m-\alpha(N-m+1) | \mathcal{S}_N) \tag*{$\triangleright$ Eq. \eqref{eq:p_conformal}}\\
=&\mathbb{P}(s_{(N-m)} < S) + \mathbb{P}(s_{(N-m-l)} < S \leq s_{(N-m)})\\
=&1 - \tilde{F}(s_{(N-m-l)})
\end{align*}
For any $z\in[0, 1]$, we have:
\begin{align*}
&\mathbb{P}(1 - \tilde{F}(s_{(N-m-l)})\leq z)\\
=&\mathbb{P}(s_{(N-m-l)}\geq\tilde{F}^{-1} (1-z))\\
\geq&\mathbb{P}(s_{(N-m-l)}\geq F^{-1}(1-z))\tag*{$\triangleright$ $F$ dominates $\tilde{F} \Rightarrow \tilde{F}^{-1}$ dominates $F^{-1}$}\\
=&1-\sum_{j=N-m-l}^{N} {N\choose j} (1-z)^j z^{N-j} \tag*{$\triangleright$ Distribution of order statistic}\\
=&\sum_{j=0}^{N-m-l-1} {N\choose j} (1-z)^j z^{N-j} \tag*{$\triangleright$ Binomial expansion}\\
=&I_{(z)}(m+l+1, N-m-l) \tag*{$\triangleright$ CDF of Binomial distribution}
\end{align*}
where $I_{(z)}(a, b)$ represents the incomplete Beta function, which is also the CDF of the Beta distribution ${\rm Beta}(a, b)$. Thus, the distribution of the false positive rate of our CBD is dominated by a Beta distribution ${\rm Beta}(m+l+1, N-m-l)$ with $l=\lfloor \alpha(N-m+1) \rfloor$ in the sense of first-order stochastic dominance.
\end{proof}

\subsection{Proof of Col. \ref{col:asymp}}

\begin{proof}[Proof of Col. \ref{col:asymp}]
Based on Thm. \ref{thm:fpr}, we have
\begin{align}\label{eq:dominance}
	\mathbb{P} (Z_N\leq \alpha + (1-\alpha)\beta + \xi)
	\geq \mathbb{P} (B\leq \alpha + (1-\alpha)\beta + \xi)
\end{align}
where $B$ follows a Beta distribution ${\rm Beta}(m+l+1, N-m-l)$ with $l=\lfloor \alpha(N-m+1) \rfloor$. Thus, the mean and variance of of $B$ are:
\begin{align*}
	\mathbb{E}[B] &= \frac{m+l+1}{N+1} \\
	{\rm Var}[B] &= \frac{(m+l+1)(N-m-l)}{(N+1)^2(N+2)} 
\end{align*}
By Chebyshev's inequality:
\begin{equation}\label{eq:cheb}
	\mathbb{P} (|B - \frac{m+l+1}{N+1}| \geq \xi) \leq \frac{1}{\xi^2} \frac{(m+l+1)(N-m-l)}{(N+1)^2(N+2)}
\end{equation}
Note that $l=\lfloor \alpha(N-m+1) \rfloor \in [\alpha(N-m+1), \alpha(N-m+1)+1)$. Thus, algebra shows that:
\begin{align*}
	& \mathbb{P} (B \geq \alpha + (1-\alpha)\beta + \xi + \frac{\alpha+1}{N})\\
	= & \mathbb{P} (B \geq \frac{m+\alpha(N-m+1)+1}{N} + \xi)\\
	\leq & \mathbb{P} (B \geq \frac{m+l+1}{N+1} + \xi)\\
	\leq & \mathbb{P} (|B - \frac{m+l+1}{N+1}| \geq \xi)
\end{align*}
and
\begin{align*}
	&\frac{1}{\xi^2 (N+2)} (\beta + \alpha(1-\beta+\frac{1}{N}) + \frac{1}{N}) (1-\beta-\alpha(1-\beta+\frac{1}{N})+\frac{1}{N})\\
	= & \frac{1}{\xi^2} \frac{(m + \alpha(N-m+1) + 1)(N-m-\alpha(N-m+1)+1)}{N^2(N+2)}\\
	\geq & \frac{1}{\xi^2} \frac{(m+l+1)(N-m-l)}{(N+1)^2(N+2)}
\end{align*}
According to inequality \eqref{eq:cheb} and also because of the continuity of the Beta distribution:
\begin{align*}
	& \lim_{N \to +\infty} \mathbb{P} (B\leq \alpha + (1-\alpha)\beta + \xi)\\
	= & \lim_{N \to +\infty} \mathbb{P} (B\leq \alpha + (1-\alpha)\beta + \xi + \frac{\alpha+1}{N})\\
	= & 1 - \lim_{N \to +\infty} \mathbb{P} (B \geq \alpha + (1-\alpha)\beta + \xi + \frac{\alpha+1}{N})\\
	\geq & 1 - \lim_{N \to +\infty} \frac{1}{\xi^2 (N+2)} (\beta + \alpha(1-\beta+\frac{1}{N}) + \frac{1}{N}) (1-\beta-\alpha(1-\beta+\frac{1}{N})+\frac{1}{N})\\
	= & 1
\end{align*}
Thus, the corollary is proved by applying the sandwich theorem to inequality \eqref{eq:dominance}.
\end{proof}

\section{Stochastic Dominance Assumption in Thm. \ref{thm:fpr}}\label{sec:dominance}

In Thm. \ref{thm:fpr}, we assume that the distribution $F$ for the LDP of the benign shadow models with inadequate training dominates the distribution $\tilde{F}$ for the LDP of the actual benign classifiers trained on more abundant data in the sense of first-order stochastic dominance. This assumption holds in general in practice. It is widely believed that models are trained to learn the data distribution \cite{ilyas2022datamodels}. Thus, intuitively, training with insufficient sampling of the data distribution will not only lead to a high empirical loss but also result in a poor robustness on the test instances. In our setting, this is reflected by a small samplewise local probability for the labeled class for most samples used for computing LDP, which may easily lead to a large LDP. In the following, we show that a larger deviation of the learned decision boundary of a binary Bayesian classifier will affect its LDP. The analysis can be extended to multi-class scenarios based on the ``winner takes all'' rule. Some notations may be abused but are all constrained to the current section.

\begin{theorem}
Consider two classes $\{a,b\}$ with equal prior and arbitrary sample distributions $\mu_a$ and $\mu_b$, respectively. Consider a Bayes classifier $g(z)=\argmax_{k\in\{a,b\}} \mu_k(z)$ (i.e. with a decision boundary $\frac{\mu_a(z)}{\mu_b(z)}=1$). We arbitrarily assume that the expected LDP for $g$ is associated with class $b$:
\begin{equation}\label{eq:local_dom_bayes}
	s = \frac{1}{2} (\mathbb{E}_{X\sim\mu_a} [\mathbb{P} (g(X + \epsilon) = b)] + \mathbb{E}_{X\sim\mu_b} [\mathbb{P} (g(X + \epsilon) = b)])
\end{equation}
Moreover, consider any two classifiers $g_1$ and $g_2$ with decision boundaries $\frac{\mu_a(z)}{\mu_b(z)}=t_1$ and $\frac{\mu_a(z)}{\mu_b(z)}=t_2$, respectively. Then, if $t_2\geq t_1\geq1$, the expected LDP for the two classifiers, denoted by $s_1$ and $s_2$, respectively, will satisfy $s_2\geq s_1\geq s$, i.e. the more deviation in the decision boundary from the optimum, the larger expected LDP.
\end{theorem}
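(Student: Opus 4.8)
The plan is to collapse the three expected-LDP quantities into integrals of one common noisy-classification probability and then read off monotonicity from the nesting of the class-$b$ decision regions. First I would use the equal-prior assumption to fold the two conditional expectations in \eqref{eq:local_dom_bayes} into a single expectation against the mixture $\mu = \tfrac{1}{2}(\mu_a + \mu_b)$, giving $s = \mathbb{E}_{X \sim \mu}[\mathbb{P}(g(X+\epsilon)=b)]$ and, applying the same definition of expected LDP to $g_1$ and $g_2$, $s_i = \mathbb{E}_{X \sim \mu}[\mathbb{P}(g_i(X+\epsilon)=b)]$ for $i \in \{1,2\}$. Since a threshold-$t$ rule assigns $z$ to class $b$ exactly when $\mu_a(z) \le t\,\mu_b(z)$, I would introduce the region $B_t = \{z : \mu_a(z) \le t\,\mu_b(z)\}$, so that $g$, $g_1$, $g_2$ correspond to $B_1$, $B_{t_1}$, $B_{t_2}$ and each expected-LDP $b$-entry becomes $\mathbb{E}_{X \sim \mu}[\mathbb{P}_\epsilon(X+\epsilon \in B_t)]$ evaluated at $t = 1, t_1, t_2$.

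The monotonicity is then immediate. Because $\mu_b$ is a density and hence nonnegative, the defining inequality $\mu_a(z) \le t\,\mu_b(z)$ only becomes easier to satisfy as $t$ grows, so $1 \le t_1 \le t_2$ forces the nesting $B_1 \subseteq B_{t_1} \subseteq B_{t_2}$. Consequently, for every fixed $X$ the map $t \mapsto \mathbb{P}_\epsilon(X+\epsilon \in B_t)$ is nondecreasing along these three thresholds, since enlarging the event region can only raise its Gaussian measure. Integrating this pointwise inequality against $\mu$ and using monotonicity of the expectation yields $s \le s_1 \le s_2$ for the class-$b$ entries.

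The one point that genuinely requires the hypothesis is confirming that these $b$-entries are in fact the LDPs, i.e.\ that the $\ell_\infty$ maximum in Def.~\ref{def:ldp} is attained at class $b$ for all three classifiers. In the binary case the averaged SLPV entries for $a$ and $b$ sum to one, so the LDP equals $\max(s_t, 1 - s_t)$ where $s_t$ denotes the $b$-entry at threshold $t$; the stated assumption that the Bayes LDP is associated with class $b$ means precisely $s \ge \tfrac{1}{2}$. Combined with the chain $s \le s_1 \le s_2$ just established, every threshold keeps the $b$-entry at least $\tfrac{1}{2}$, so class $b$ remains dominant and each $s_i$ equals its $b$-entry. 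I expect this bookkeeping about which class dominates---rather than the monotonicity argument itself---to be the only delicate step; the rest is the elementary observation that raising the likelihood-ratio threshold monotonically expands the region classified as $b$.
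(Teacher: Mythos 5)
Your proof is correct, and its core is the same monotonicity fact the paper's proof rests on: raising the likelihood-ratio threshold from $1$ to $t_1$ to $t_2$ enlarges the region classified as $b$, which can only increase the class-$b$ entry of the expected LDP. The difference is in execution. The paper never phrases this as set nesting; it expands each expectation as an integral and exhibits the difference between consecutive classifiers explicitly as a nonnegative ``annulus'' term, $\tfrac{1}{2}\int \mathbb{P}\bigl(t_1 < \mu_a(z+\epsilon)/\mu_b(z+\epsilon) \leq t_2\bigr)\,(\mu_a(z)+\mu_b(z))\,dz$, obtained by rewriting $\mathbb{P}(g_2(X+\epsilon)=b) = 1-\mathbb{P}(g_2(X+\epsilon)=a)$ and relating the $a$- and $b$-probabilities of $g_1$ and $g_2$ term by term. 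Your route via $B_1 \subseteq B_{t_1} \subseteq B_{t_2}$ and monotonicity of probability measures is more economical and yields the identical chain of inequalities for the $b$-entries without the integral bookkeeping. More importantly, your third paragraph makes explicit a step the paper glosses over: the paper defines each $s_i$ as a maximum over the two classes, proves the ordering only for the class-$b$ entries, and then asserts $s_2 \geq s_1 \geq s$ directly; your observation that the two averaged entries sum to one, so the hypothesis $s \geq \tfrac{1}{2}$ propagates along the chain and keeps class $b$ dominant for $g_1$ and $g_2$, is precisely the justification needed to identify each $s_i$ with its $b$-entry. So your write-up is both a valid proof and, on that point, slightly more complete than the paper's own.
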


\begin{proof}
For each $i\in\{1, 2\}$, the expectation of the LDP for classifier $g_i$ can be expressed as:
\begin{equation}\label{eq:local_dom_gi}
	s_i = \max_{k\in\{a, b\}} \frac{1}{2} (\mathbb{E}_{X\sim\mu_a}[g_i(X+\epsilon)=k] + \mathbb{E}_{X\sim\mu_b}[g_i(X+\epsilon)=k])
\end{equation}
Since $t_2\geq t_1\geq0$, given the decision boundaries of $g_1$ and $g_2$, we have:
\begin{align*}
	&\mathbb{E}_{X\sim\mu_a} [\mathbb{P} (g_1(X+\epsilon)=a)]\\
	= & \int \mathbb{P} (\frac{\mu_a(z+\epsilon)}{\mu_b(z+\epsilon)} > t_1) \mu_a(z) dz\\
	= & \int \Big[ \mathbb{P} (\frac{\mu_a(z+\epsilon)}{\mu_b(z+\epsilon)} > t_2) + \mathbb{P} (t_1 < \frac{\mu_a(z+\epsilon)}{\mu_b(z+\epsilon)} \leq t_2) \Big] \mu_a(z) dz\\
	= & \mathbb{E}_{X\sim\mu_a} [\mathbb{P} (g_2(X+\epsilon)=a)] + \int \mathbb{P} (t_1 < \frac{\mu_a(z+\epsilon)}{\mu_b(z+\epsilon)} \leq t_2) \mu_a(z) dz
\end{align*}
and similarly, 
\begin{align*}
	&\mathbb{E}_{X\sim\mu_b} [\mathbb{P} (g_2(X+\epsilon)=b)]\\
	= & \int \mathbb{P} (\frac{\mu_a(z+\epsilon)}{\mu_b(z+\epsilon)} \leq t_2) \mu_b(z) dz\\
	= & \int \Big[ \mathbb{P} (\frac{\mu_a(z+\epsilon)}{\mu_b(z+\epsilon)} \leq t_1) + \mathbb{P} (t_1 < \frac{\mu_a(z+\epsilon)}{\mu_b(z+\epsilon)} \leq t_2) \Big] \mu_b(z) dz\\
	= & \mathbb{E}_{X\sim\mu_b} [\mathbb{P} (g_1(X+\epsilon)=b)] + \int \mathbb{P} (t_1 < \frac{\mu_a(z+\epsilon)}{\mu_b(z+\epsilon)} \leq t_2) \mu_b(z) dz
\end{align*}
According to above relations between $g_1$ and $g_2$, we have
\begin{align*}
	&\frac{1}{2} (\mathbb{E}_{X\sim\mu_a} [\mathbb{P} (g_2(X + \epsilon) = b)] + \mathbb{E}_{X\sim\mu_b} [\mathbb{P} (g_2(X + \epsilon) = b)])\\
	= & \frac{1}{2} (1 - \mathbb{E}_{X\sim\mu_a} [\mathbb{P} (g_2(X + \epsilon) = a)] + \mathbb{E}_{X\sim\mu_b} [\mathbb{P} (g_2(X + \epsilon) = b)])\\
	= & \frac{1}{2} \Big(1 - \mathbb{E}_{X\sim\mu_a} [\mathbb{P} (g_1(X + \epsilon) = a)] + \int \mathbb{P} (t_1 < \frac{\mu_a(z+\epsilon)}{\mu_b(z+\epsilon)} \leq t_2) \mu_a(z) dz\\
	&+ \mathbb{E}_{X\sim\mu_b} [\mathbb{P} (g_1(X + \epsilon) = b)] + \int \mathbb{P} (t_1 < \frac{\mu_a(z+\epsilon)}{\mu_b(z+\epsilon)} \leq t_2) \mu_b(z) dz)\Big)\\
	=  & \frac{1}{2} (\mathbb{E}_{X\sim\mu_a} [\mathbb{P} (g_1(X + \epsilon) = b)] + \mathbb{E}_{X\sim\mu_b} [\mathbb{P} (g_1(X + \epsilon) = b)])\\
	& + \frac{1}{2} \int \mathbb{P} (t_1 < \frac{\mu_a(z+\epsilon)}{\mu_b(z+\epsilon)} \leq t_2) (\mu_a(z) + \mu_b(z)) dz\\
	\geq & \frac{1}{2} (\mathbb{E}_{X\sim\mu_a} [\mathbb{P} (g_1(X + \epsilon) = b)] + \mathbb{E}_{X\sim\mu_b} [\mathbb{P} (g_1(X + \epsilon) = b)])
\end{align*}
Similarly, since $t_1\geq1$, for $g_1$ and $g$, we have:
\begin{align*}
	&\frac{1}{2} (\mathbb{E}_{X\sim\mu_a} [\mathbb{P} (g_1(X + \epsilon) = b)] + \mathbb{E}_{X\sim\mu_b} [\mathbb{P} (g_1(X + \epsilon) = b)])\\
	\geq & \frac{1}{2} (\mathbb{E}_{X\sim\mu_a} [\mathbb{P} (g(X + \epsilon) = b)] + \mathbb{E}_{X\sim\mu_b} [\mathbb{P} (g(X + \epsilon) = b)])
\end{align*}
Thus, $s_2\geq s_1\geq s$ if $t_2\geq t_1\geq1$.
\end{proof}

\section{Details for the Experimental Setup}\label{sec:exp_settings_details}

\subsection{Details for the Datasets}\label{subsec:exp_settings_datasets}

GTSRB is an image dataset for German traffic signs from 43 classes \cite{GTSRB}. The training set and the test set contain 39,209 and 12,630 images, respectively. The image sizes vary in a relatively large range. Thus, we resize all the images to $32\times 32$ in our experiments for convenience.

SVHN is a real-world image dataset created from house numbers in Google Street View images. It contains $32\times 32$ color images from the 10 classes of the ten digits \cite{SVHN}. The training set and the test set contain 73,257 and 26,032 images, respectively.

CIFAR-10 is a benchmark dataset with $32\times 32$ color images from 10 classes for different categories of objects \cite{CIFAR10}. The training set contains 50,000 images, and the test set contains 10,000 images, both evenly distributed in the 10 classes.

TinyImageNet is a subset of the ImageNet dataset \cite{ImageNet}. It contains color images from 200 classes downsized to $64\times 64$. The training set and the test set contain 100,000 and 10,000 images, respectively, both evenly distributed into the 200 classes.

\subsection{Details for Trigger Generation in Sec. \ref{subsubsec:exp_certification_setup}}\label{subsec:exp_settings_triggers_cert}

The triggers used by the attacks to evaluate the certification of CBD are generated with randomness.
But for each attack, once the trigger is generated, the same trigger will be used by all the poisoned images.
For the trigger pattern on GTSRB, each pixel is randomly perturbed with a 0.25 probability.
For each pixel being perturbed, the perturbation sizes for all three channels are randomly and independently selected in the interval [0, 12/255].
For the trigger pattern on SVHN, each pixel among the $16\times16$ patch on the top left of the image is randomly perturbed with a 0.5 probability. The perturb sizes for all three channels are randomly and independently selected between -9/255 and 9/255.
For the trigger pattern on CIFAR-10, we found that purely random patterns are too hard to learn, even with the help of nullification.
Thus, we use the recursive pattern that looks like a chessboard in \cite{Post-TNNLS}, but with the perturbation size randomly selected from 1/255, 2/255, 3/255, and 4/255.
For TinyImageNet, we used the same trigger generation approach as for SVHN.
For all the patterns, we apply a projection to ensure that the $\ell_2$ norm of the trigger perturbation is no larger than 0.75.

\subsection{Detailed Training Configurations}\label{subsec:exp_settings_training}

For GTSRB, training is performed for 100 epochs with a batch size of 128 and a learning rate of $10^{-3}$ (with 0.1 decay per 20 epochs), using the Adam optimizer \cite{Adam}. Each training image is also augmented with a $\pm5$ degree random rotation.

For SVHN, training is performed for 50 epochs with a batch size of 128 and a learning rate of $10^{-3}$, also using the Adam optimizer. Each training image is augmented with a random horizontal flipping.

For CIFAR-10, training is performed for 100 epochs with a batch size of 32 and a learning rate of $10^{-3}$, using the Adam optimizer. Training data augmentation involves random horizontal flipping and random cropping.

For TinyImageNet, training is performed for 100 epochs with a batch size of 128  and a learning rate of $10^{-3}$, using the Adam optimizer. Training data augmentation involves random horizontal flipping and random cropping.

\subsection{Details for the Triggers Uses in Sec. \ref{subsec:exp_detection}}\label{subsec:exp_settings_triggers_detection}

For the BadNet trigger in \cite{BadNet}, we use a $2\times2$ noisy patch with a randomly selected location for each attack on GTSRB and CIFAR-10.
For SVHN, we increase the patch size to $3\times3$ to ensure the trigger is learned.
The poisoning ratios on GTSRB, SVHN, and CIFAR-10 are 7.8\%, 0.4\%, and 3\%, respectively.

For the ``chessboard'' pattern in \cite{Post-TNNLS}, we use 3/255 maximum perturbation magnitude for SVHN and CIFAR-10. For GTSRB, we use a larger perturbation magnitude 6/255 to ensure the trigger is learned.
The poisoning ratios on GTSRB, SVHN, and CIFAR-10 are 7.8\%, 2.7\%, and 6\%, respectively.

For the blend trigger in \cite{Targeted}, we generate an image-wide noisy pattern as the trigger, with a mixing rate of 0.1. The poisoning ratios on GTSRB, SVHN, and CIFAR-10 are 7.8\%, 2.7\%, and 3\%, respectively.

\begin{figure}[t]
\centering
\begin{minipage}{\textwidth}
    \centering
    \includegraphics[width=\linewidth]{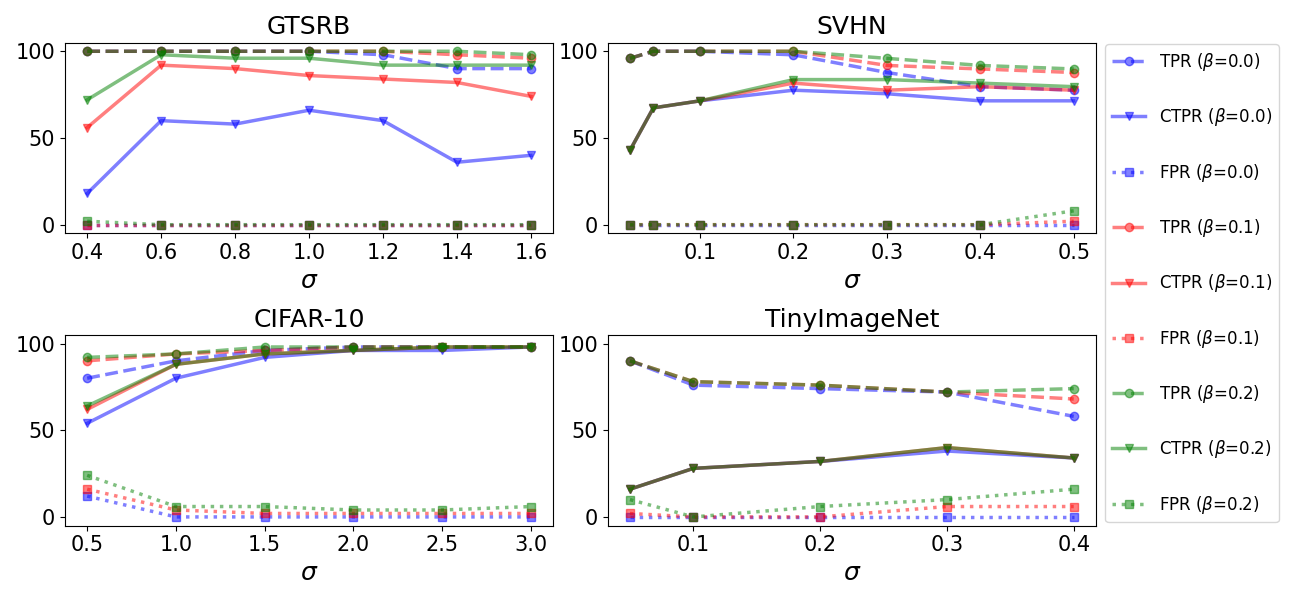}
\end{minipage}
\caption{CTPR (solid) of CBD against backdoor attacks with random perturbation triggers with perturbation magnitude $\ell_2\leq0.75$ over a range of $\sigma$ for $\beta=m/N\in\{0, 0.1, 0.2\}$ for the four datasets. TPR (dashed) and FPR (dotted) are also plotted for reference.}\label{fig:exp_certification_complete}
\end{figure}

\section{Additional Certification Results}\label{sec:exp_certification_additional}

In Sec. \ref{subsec:exp_certification}, we focused on backdoor attacks with random perturbation triggers with perturbation magnitude $\ell_2\leq0.75$.
In Fig. \ref{fig:exp_certification}, we showed the CTPR and TPR for CBD to validate our detection guarantee on backdoored models.
In particular, we showed that CBD achieves maximum CTPRs (TPRs) 98\% (100\%), 84\% (100\%), 98\% (98\%), and 40\% (72\%) on GTSRB, SVHN, CIFAR-10, and TinyImageNet, respectively, across all choices of $\sigma$ and $\beta$.
Here, in Fig. \ref{fig:exp_certification_complete}, we also show the FPR for all combinations of $\sigma$ and $\beta$.
Especially, the FPRs corresponding to the aforementioned TPRs and CTPRs on the four datasets are  0\%, 0\%, 6\%, and 10\%, respectively.
From the figure, we also observe that for $\beta\leq0.2$, CTPR and/or TPR may be improved by the increment in $\beta$, with a rather small sacrifice in FPR.

Also, in this section, we consider triggers with larger perturbation magnitudes with $0.75<\ell_2\leq1.5$.
The experiments follow the same settings as described in Sec. \ref{subsubsec:exp_certification_setup}.
The $\ell_2$ constraint is satisfied by increasing the per-pixel perturbation sizes accordingly.
For GTSRB, for example, the trigger is generated in the same way as described in App. \ref{subsec:exp_settings_triggers_cert}, but for each pixel being perturbed, the perturbation sizes for all three channels are randomly and independently selected in the interval [12/255, 24/255].
In Fig. \ref{fig:exp_certification_large}, we show the CTPR for the same range of $\sigma$ for each dataset as in Fig. \ref{fig:exp_certification_complete}, and for $\beta=0, 0.1, 0.2$.
The TPR and the FPR are also plotted for reference.
Compared with the performance of CBD for triggers with smaller perturbation magnitudes, there is a drop in both CTPR and TPR, especially for GTSRB and SVHN.
However, CBD still achieves up to 100\% (68\%), 100\% (76\%), 100\% (100\%), and 84\% (28\%) empirical (certified) true positive rates on GTSRB, SVHN, CIFAR-10, and TinyImageNet, respectively.

\begin{figure}[t]
\centering
\begin{minipage}{\textwidth}
    \centering
    \includegraphics[width=\linewidth]{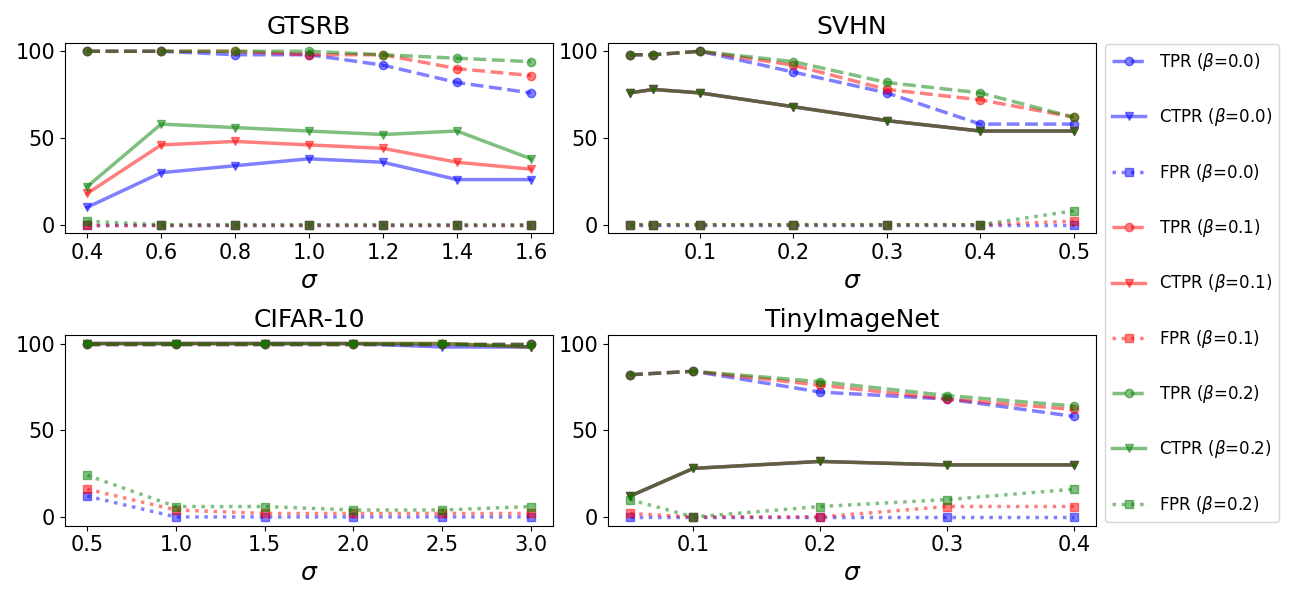}
\end{minipage}
\caption{CTPR (solid) of CBD against backdoor attacks with random triggers with perturbation magnitude $0.75<\ell_2\leq1.5$ over a range of $\sigma$ for $\beta=m/N\in\{0, 0.1, 0.2\}$ for the four datasets. TPR (dashed) and FPR (dotted) are also plotted for reference.}\label{fig:exp_certification_large}
\end{figure}

In Fig. \ref{fig:exp_certified_region} and Fig. \ref{fig:exp_certified_region_large}, we show the example certified regions on GTSRB for $\sigma=0.5, 1.0, 2.0$ and for $\beta=0, 1.0, 2.0$.
The certified regions in the two sets of figures are obtained from Thm. \ref{thm:tp} though plotted for different ranges of the trigger size.
For all combinations of $\sigma$ and $\beta$, the shape of the certified region matches our theoretical result that attacks with a larger STR and a smaller perturbation size of the trigger are more likely to be detected with a guarantee.
Especially for a larger trigger size, the detection guarantee also requires a larger STR, resulting in a reduced margin for the certified region as the trigger size grows.

In Fig. \ref{fig:exp_certified_region}, we show that empirically, most of the attacks with the random perturbation trigger satisfying the $\ell_2\leq0.75$ constraint fall into the certified region associated with $\beta=0.2$, showing the strong detection and certification power of our CBD against attacks with moderate-sized triggers.
In Fig. \ref{fig:exp_certified_region_large}, for larger trigger perturbation magnitudes, there is a drop in the number of attacks falling into the certified region, leading to a drop in the CTPR.
Although these results show the limitation of CBD certification against triggers with overly large perturbation magnitudes, these triggers may be easily revealed by other (even simpler) detectors, including a human inspection.
Moreover, we found that sometimes, the minimum STR of an attack increases as the perturbation magnitude of the trigger increases.
The same phenomenon is also observed on CIFAR-10, which allows attacks with even larger trigger perturbation magnitudes to be detected and certified by our CBD.
One possible reason is that for triggers with a small perturbation magnitude, the class discriminate features in the samples embedded with the trigger may compromise the learning of the trigger during training.

\begin{figure}[t]
\centering
\begin{minipage}{\textwidth}
    \centering
    \includegraphics[width=\linewidth]{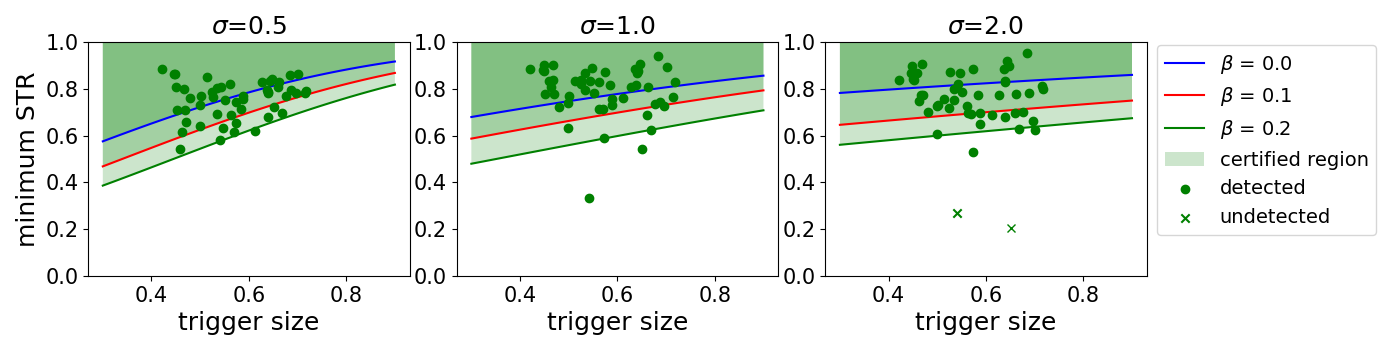}
\end{minipage}
\caption{Example of the certified region obtained using Thm. \ref{thm:tp} on GTSRB. Most of the attacks with random triggers with magnitude constraint $\ell_2\leq0.75$ fall into the certified region for $\beta=0.2$.}\label{fig:exp_certified_region}
\end{figure}

\begin{figure}[t]
\centering
\begin{minipage}{\textwidth}
    \centering
    \includegraphics[width=\linewidth]{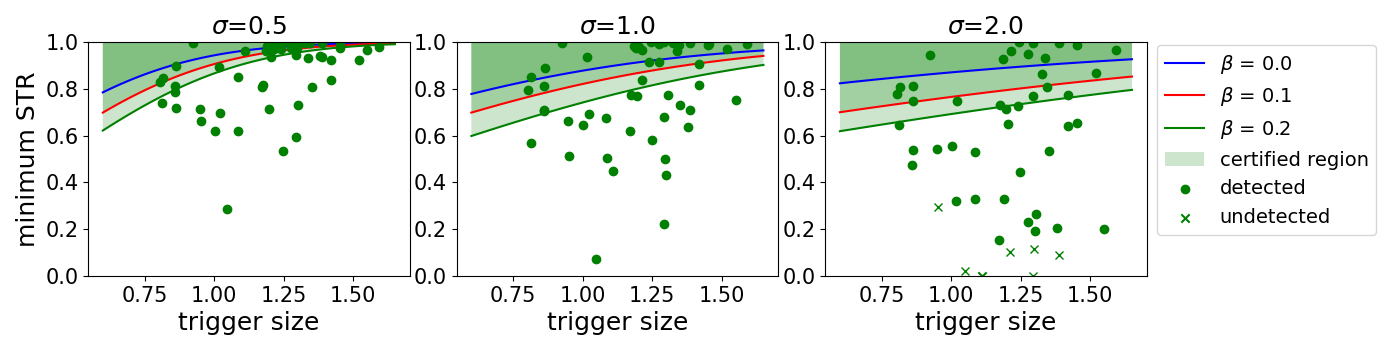}
\end{minipage}
\caption{Example of the certified region obtained using Thm. \ref{thm:tp} on GTSRB (plotted for a different range of trigger sizes compared with Fig. \ref{fig:exp_certified_region}). There are still many attacks with random triggers with magnitude constraint $0.75<\ell_2\leq1.5$ fall into the certified region.}\label{fig:exp_certified_region_large}
\end{figure}

\section{Empirical Validation of the FPR Guarantee}\label{sec:exp_certification_fpr}

We validate the probabilistic upper bound for the FPR in Thm. \ref{thm:fpr} using the shadow models on GTSRB as an example.
We train 300 shadow models and sample a number of them to form the calibration set to evaluate the FPR on the remaining ones.
Such a process is repeated 500 times for each calibration size $N\in\{25, 50, 100\}$ and $\beta=m/N\in\{0, 0.1, 0.2\}$, which produces an empirical CDF for each combination of $N$ and $\beta$, as shown in Fig. \ref{fig:fpr}.
Since in the settings here, the LDP distribution for the classifiers to be inspected is the same as the distribution of the LDPs in the calibration set, the stochastic dominance assumption in Thm. \ref{thm:fpr} is satisfied.
Thus, for each $N$ and $\beta$, the empirical CDF (solid) is dominated by the CDF of the Beta random variable obtained by Thm. \ref{thm:fpr} (dashed) as shown in Fig. \ref{fig:fpr}.
Moreover, we find that for all $\beta$ choices, the FPR value at 0.95 quantile decreases as $N$ increases, which implies that a larger calibration set (which requires more shadow models) may help to reduce the worst cases FPR.

\begin{figure}[t]
\centering
\begin{minipage}{\textwidth}
    \centering
    \includegraphics[width=\linewidth]{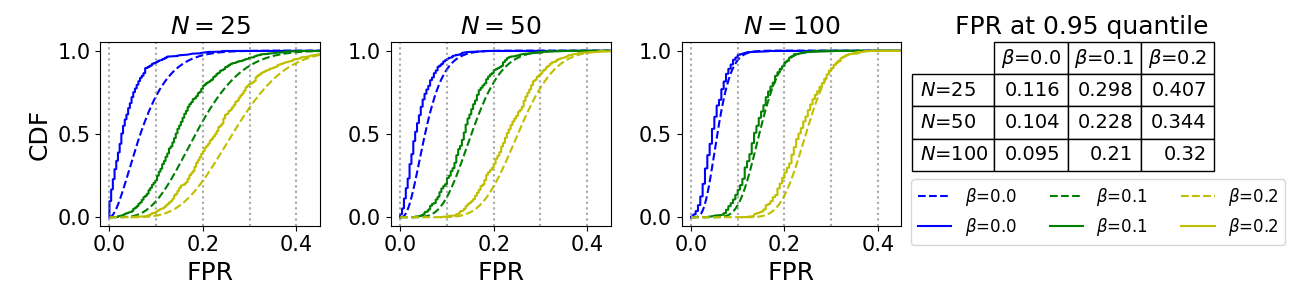}
\end{minipage}
\caption{
Validation of the FPR upper bound in Thm. \ref{thm:fpr} using the shadow models on GTSRB.
For each calibration size $N$ and $\beta$, the empirical CDF of the FPR (solid) is indeed dominated by the CDF of the Beta random variable (dashed) derived by Thm. \ref{thm:fpr}.
In the table on the right, the decrement of FPR at 0.95 quantiles as $N$ grows implies a better FPR control with a larger calibration set.
}\label{fig:fpr}
\end{figure}

\section{Computational Complexity}

For a new domain, our detection procedure requires $N$ shadow model training and $(N+1)$ LDP estimation. The $N$ shadow models and their estimated LDP statistics can also be used in the future for the same domain.
On a single RTX 2080 Ti card, training of one shadow model on GTSRB, SVHN, CIFAR-10, and TinyImageNet requires 39s, 45s, 31s, and 2462s, respectively, while LDP estimation requires 8s, 4s, 7s, and 86s on the four datasets respectively.

\section{Additional Technical Details for CBD Detection and Certification}

In Sec. \ref{subsec:detection_procedure}, we described the detection procedure of CBD, which employs an adjustable conformal prediction that assumes $m$ overly large outliers in the calibration set, where $m$ may be prescribed based on prior knowledge.
Here, we introduce a simple scheme based on anomaly detection that can be used to determine $m$ in practice with very little prior knowledge.
Let $\mathcal{S}_N=\{s_1, \cdots, s_N\}$ be the calibration set of LDP statistics obtained from the shadow models.
Our goal is to check if there are large outliers.
To this end, we compute the median absolute deviation (MAD \cite{MAD}) over the elements in $\mathcal{S}_N$ by ${\rm MAD}={\rm median}_{s\in\mathcal{S}_N}(| s - {\rm median}_{s'\in\mathcal{S}_N} (s') |)$.
Under the Gaussian assumption, an outlier $s$ for 0.05 single-tailed significance level will satisfy $(s-{\rm median}_{s'\in\mathcal{S}_N} (s')) / (1.4826 \cdot {\rm MAD}) \geq 1.645$.
Using this scheme, we obtain the proportion of outliers $\beta=m/N$ as 0.16, 0.03, 0.01, and 0.06 for GTSRB, SVHN, CIFAR-10, and TinyImageNet, respectively.
While the scheme suggests relatively conservative adjustment to the conformal prediction for most datasets, the suggested choice of $\beta$ for GTSRB will lead to a clear increment in CTPR with negligible increment in FPR, based on our results in Fig. \ref{fig:exp_certification_complete}.
Other anomaly detection methods with more prior knowledge may also be used.

In this section, we also present the detailed algorithm for CBD detection and certification.

\begin{algorithm}
\caption{Algorithm for CBD detection}\label{alg:detection}
\begin{algorithmic}[1]

\State \textbf{Input:} classifier $f(\cdot;w)$ to be inspected with $K$ classes; clean validation set $\mathcal{D}$ for detection.

\State \textbf{Hyperparameter:} standard devistion $\sigma$ for the Gaussian noise; size of the calibration set $N$; number of assumed outliers in the calibration set $m$; significance level $\alpha$ for conformal prediction; $J$ random samples of the Gaussian noise ($J=1024$ in our experiments).

\State \textbf{Step 1: estimate LDP for $f(\cdot;w)$.}

\State Randomly sample $x_1, \cdots x_K$ from $\mathcal{D}$ satisfying $f(x_k;w)=k$ for $\forall k\in\{1, \cdots, K\}$.

\For{$k=1:K$}

\State Initialize a frequency vector $\boldsymbol{q}=\boldsymbol{0}$

\For{$j=1:J$}

\State Sample $\epsilon$ from $\mathcal{N}(0, \sigma^2I)$

\State Get the class decision $y=f(x_k+\epsilon;w)$

\State $[\boldsymbol{q}]_y\leftarrow[\boldsymbol{q}]_y+1$

\EndFor

\State Compute SLPV for $x_k$ by: $\boldsymbol{p}(x_k|w,\sigma)=\frac{\boldsymbol{q}}{||\boldsymbol{q}||_1}$

\EndFor

\State Compute LDP for $f(\cdot;w)$ by: $s(w) = || \frac{1}{K}\sum_{k=1}^K \boldsymbol{p}(x_k|w,\sigma) ||_{\infty}$

\State \textbf{Step 2: construct a calibration set $\mathcal{S}_N$.}

\State Train shadow models $f(\cdot;w_1), \cdots, f(\cdot;w_N)$ on randomly sampled $\mathcal{D}_{\rm Train}\subset{\mathcal{D}}$

\For{$n=1:N$}

Randomly sample $x_1^{(n)}, \cdots, x_K^{(n)}$ from $\mathcal{D}\setminus\mathcal{D}_{\rm Train}$ satisfying $f(x_k^{(n)};w_n)=k$ for $\forall k\in\{1, \cdots, K\}$.

\For{$k=1:K$}

\State Initialize a frequency vector $\boldsymbol{q}=\boldsymbol{0}$

\For{$j=1:J$}

\State Sample $\epsilon$ from $\mathcal{N}(0, \sigma^2I)$

\State Get the class decision $y=f(x_k^{(n)}+\epsilon;w_n)$

\State $[\boldsymbol{q}]_y\leftarrow[\boldsymbol{q}]_y+1$

\EndFor

\State Compute SLPV for $x_k^{(n)}$ by: $\boldsymbol{p}(x_k^{(n)}|w_n,\sigma)=\frac{\boldsymbol{q}}{||\boldsymbol{q}||_1}$

\EndFor

\State Compute LDP for $f(\cdot;w_n)$ by: $s(w_n) = || \frac{1}{K}\sum_{k=1}^K \boldsymbol{p}(x_k^{(n)}|w_n,\sigma) ||_{\infty}$

\EndFor

\State Construct calibration set $\mathcal{S}_N=\{s(w_1)), \cdots, s(w_N)\}$.

\State \textbf{Step 3\&4: adjustable conformal prediction.}

\State Compute p-value $q_m(w) = 1 - \frac{1 + {\rm min}\{|\{s\in\mathcal{S}_N:s < s(w)\}|, N-m\}}{N-m+1}$.

\State \textbf{Output:} $f(\cdot;w)$ is backdoored if $q_m(w)\leq\alpha$; otherwise, $f(\cdot;w)$ is not backdoored.

\end{algorithmic}
\end{algorithm}

\begin{algorithm}
\caption{Algorithm for CBD Certification}\label{alg:certification}
\begin{algorithmic}[1]

\State \textbf{Input:} backdoor trigger $\delta$; target class $t$; victim classifier $f(\cdot;w)$ with $K$ classes; samples $x_1, \cdots, x_K$ for detection; calibration set $\mathcal{S}_N$; a standard Gaussian CDF $\Phi$.

\State \textbf{Hyperparameter:} standard devistion $\sigma$ for the Gaussian noise; size of the calibration set $N$; number of assumed outliers in the calibration set $m$; significance level $\alpha$ for conformal prediction; $J$ random samples of the Gaussian noise ($J=1024$ in our experiments).

\For{$k=1:K$}

\State Initialize $r=0$

\For{$j=1:J$}

\State Sample $\epsilon$ from $\mathcal{N}(0, \sigma^2I)$

\If{$f(\delta(x_k)+\epsilon;w) = t$}

\State $r\leftarrow r+1$

\EndIf

\State STR for $x_k$ is $R_{\delta,t}(x_k|w,\sigma) = \frac{r}{J}$.

\EndFor

\EndFor

\State $\pi=\min_{k=1,\cdots,K} R_{\delta, t}(x_k|w,\sigma)$.

\State $\Delta=\max_{k=1,\cdots,K} ||\delta(x_k)-x_k||_2$.

\State Get $S = s_{(N-m-\lfloor \alpha(N-m+1) \rfloor)}$ from the calibration set $\mathcal{S}_N$.

\State \textbf{Output:} If $\Delta < \sigma (\Phi^{-1} (1 - S) - \Phi^{-1} (1 - \pi))$, the attack is guaranteed to be detected; otherwise, the attack may be detected but without guarantee.

\end{algorithmic}
\end{algorithm}

\section{Effectiveness of Backdoor Attacks with Random Perturbation Trigger}

In Sec. \ref{subsec:exp_certification}, we evaluated the certification performance of our CBD against backdoor attacks with the random perturbation trigger.
The attacks were implemented following the conventional strategy by poisoning the training set with some images embedded with the trigger and labeled to the backdoor target class.
We stated that attacks with poisoning rate lower than those used by our experiments will easily lead to a failed attack.

\begin{figure}[h]
\centering
\begin{minipage}{.45\textwidth}
    \centering
    \includegraphics[width=\linewidth]{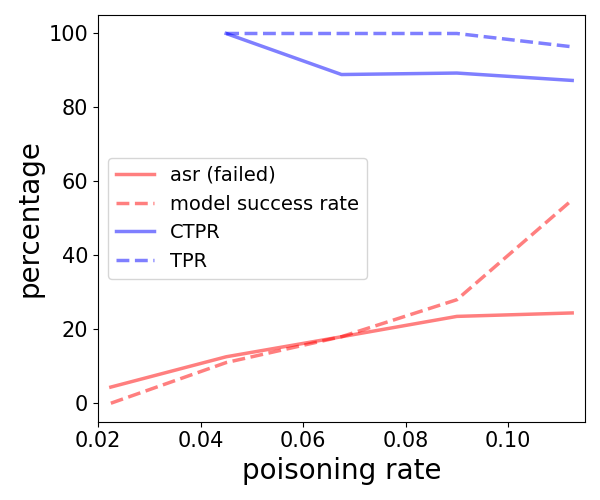}
\end{minipage}
\caption{Analysis of the effectiveness of the backdoor attacks with the random perturbation trigger on CIFAR-10. CBD is effective in both detection and certification for a range of poisoning rates.}\label{fig:exp_attack_analysis}
\end{figure}

Here, we show more details regarding the effectiveness of these attacks.
In particular, we focus on the CIFAR-10 dataset as an example.
In Sec. \ref{subsec:exp_certification}, we used a 11.3\% poisoning rate (250 poisoned images from 9 classes other than the backdoor target class, divided by 20000 training images).
Here, we also consider attacks with 50, 100, 150, and 200 poisoned images per class, i.e. with 2.3\%, 4.5\%, 6.8\%, and 9\% poisoning rates, respectively.
For each poisoning rate, we create 100 attacks and train one model for each attack.
The backdoor target class for each attack is randomly selected.

In Fig. \ref{fig:exp_attack_analysis}, we show for each poisoning rate the \textit{model success rate}, which is the proportion of attacks with the (samplewise) attack success rate no less than 90\%.
We also show the average (samplewise) attack success rate for all the failed attacks for each poisoning rate.
With low poisoning rates, the attack may easily fail.
But as long as an attack is successful, our CBD is able to detect it with high accuracy, possibly with a guarantee as well.

\section{Difference Between Certified Backdoor Detection and Certified Robustness Against Adversarial Examples}\label{sec:difference}

The certification for backdoor detection is fundamentally different from the certification for robustness against adversarial examples from the following five aspects:
\begin{itemize}[leftmargin=*]
\item The strength and the stealthiness of an adversarial example are both determined by the adversarial perturbation size.
However, for a successful backdoor attack (failed attacks are not supposed to be detected or certified), the attack strength is determined by the robustness of the learned trigger, while the stealthiness is determined by the trigger perturbation magnitude.
Thus, for adversarial examples, a 1-D interval specified by the \textit{certified radius} is derived, while for our CBD, we derive a 2-D \textit{certified region} jointly specified by the trigger robustness and its perturbation magnitude.

\item Certification for adversarial examples by randomized smoothing investigates the decision region near a given input.
Differently, our certification method for backdoor detection focuses on the model behavior instead, since we are detecting backdoored models.
Such model behavior is quantified by our proposed LDP statistic computed on a set of samples from all classes (Definition \ref{def:ldp}).

\item Certification for adversarial examples is uni-directional, which provides guarantees on desired model predictions.
By contrast, our certification for backdoored model detection is bi-directional.
We not only provide guarantees on the detection of backdoored models but also control the false detection rate on benign models.
Otherwise, one can easily design a backdoor detector that always triggers an alarm, which provides detection guarantees to all backdoor attacks but is useless in practice.

\item For both adversarial examples and backdoor attacks, certified robustness and certified detection have different adversarial constraints -- they cover the two ends of the attack strength, respectively.
For adversarial examples, existing certifications are all robustness guarantees that cover weak adversarial attacks with small perturbation magnitudes.
For backdoor attacks, existing certified defenses provide robustness guarantees on the training-time failure of trigger injection into the model and/or test-time failure of trigger recognition – both failures require the attack to be sufficiently weak.
Our CBD, however, focuses on certified backdoor detection, which addresses strong backdoor attacks with robust triggers.

\item A strong certified robustness against adversarial examples (and also backdoor attacks) is usually achieved by robust training, which requires the defender to have full access to the training process.
However, our certified backdoor detector is deployed post-training, which allows the attacker to have full control of the training process.

\end{itemize}

\end{document}